\documentclass[twocolumn]{autart}
\usepackage{cite}
\usepackage{hyperref}
\usepackage{amssymb}
\usepackage{amsmath,amsfonts}
\usepackage{savesym}
\savesymbol{AND}
\usepackage{algorithm,algorithmic}
\usepackage{graphicx}
\usepackage{textcomp}
\usepackage{mathtools, bbm}
\usepackage[mathscr]{euscript}
\usepackage{dsfont}
\usepackage{booktabs}
\usepackage{nicefrac}
\usepackage{xcolor}
\usepackage[font=footnotesize]{subcaption}
\def\proofsq{\mbox{\rule[0pt]{1.3ex}{1.3ex}}}
\newenvironment{proof}{\par\noindent{\bf Proof\ }}{\hfill\proofsq}
\newtheorem{definition}{\bf Definition}
\newtheorem{lemma}{\bf Lemma}
\newtheorem{assumption}{}

\newtheorem{theorem}{\bf Theorem}
\newtheorem{corollary}{\bf Corollary}

\newcommand{\R}{\mathbb{R}}
\newcommand{\E}{\mathbb{E}}

\newcommand{\p}{\mathbb{P}}
\newcommand{\1}{\mathds{1}}
\allowdisplaybreaks
\begin{document}
\begin{frontmatter}
\title{Policy Gradient Methods for Distortion Risk Measures}
\author{Nithia Vijayan}\ead{nithiav@cse.iitm.ac.in},
\author {Prashanth L.A.} \ead{prashla@cse.iitm.ac.in}
\address {Department of Computer Science and Engineering, Indian Institute of Technology Madras, India.}%
\begin{keyword}
Distortion risk measure, risk-sensitive RL, non-asymptotic analysis, policy gradient.%
\end{keyword}
\begin{abstract}
We propose policy gradient algorithms which learn risk-sensitive policies in a reinforcement learning (RL) framework.
Our proposed algorithms maximize the distortion risk measure (DRM) of the cumulative reward in an episodic Markov decision process in on-policy and off-policy RL settings, respectively. We derive a variant of the policy gradient theorem that caters to the DRM objective, and integrate it with a likelihood ratio-based gradient estimation scheme. We derive non-asymptotic bounds that establish the convergence of our proposed algorithms to an approximate stationary point of the DRM objective.%
\end{abstract}
\end{frontmatter}
\section{Introduction}
\label{sec:intro}
In a classical reinforcement learning (RL) problem, the objective is to learn a policy that maximizes the mean of the cumulative rewards. But, in many practical applications, we may learn unsatisfactory policies if we only consider the mean. Instead of focusing only on the mean, it is important to consider other aspects of a cumulative reward distribution, viz., variance, shape, and tail probabilities. In literature, a statistical measure, called a risk measure is used to quantify these aspects.

While several risk measures are studied in the literature, there is no consensus on an ideal risk measure. Coherent risk measures are a popular class of risk measures that satisfy desirable properties from a risk aversion viewpoint. In particular, a risk measure is said to be coherent if it is translation invariant, sub-additive, positive homogeneous, and monotonic \cite{artzner99}.
Value-at-Risk (VaR) is a popular risk measure that lacks coherence as it is not sub-additive. Conditional Value-at-Risk (CVaR) \cite{rockafellar2000} is a conditional expectation of outcomes not exceeding VaR, and is a coherent risk measure. However, as suggested in \cite{wang02}, CVaR is not preferable since it treats all outcomes below VaR equally, and ignores those beyond VaR.
Cumulative prospect theory (CPT) \cite{tversky1992advances} is a popular risk measure in human-centered decision making problems. However, CPT is a non-coherent risk measure.
Instead of giving equal focus to all the outcomes, or treating only a fraction of the outcomes using a tail-based risk measure such as CVaR,  it is preferable to consider all outcomes with the right emphasis, while retaining coherency. We describe such a risk measure next.

A family of risk measures called distortion risk measures (DRM) \cite{denneberg1990,wang96} is widely used for optimization in finance and insurance. A DRM uses a distortion function to distort the original distribution, and calculate the mean of the rewards with respect to the distorted distribution. A distortion function allows one to vary the emphasis on each possible reward value. The choice of the distortion function governs the risk measure. A DRM with an identity distortion function is simply the mean of the rewards, while a concave distortion function ensures that the DRM is coherent \cite{wirch03}.
As an aside, the spectral risk functions are equivalent to distortion functions \cite{gzyl06}.
The popular risk measures like VaR and CVaR can be expressed as a DRM using appropriate distortion functions. But, the distortion function is discontinuous for VaR, and though continuous, it is not differentiable at every point for CVaR. As shown in \cite{wang02}, smoothness is a desirable property for a distortion function. In this paper, we focus on smooth distortion functions. Examples include the dual-power function, quadratic function, square-root function, exponential function, and logarithmic function (for additional examples, refer to \cite{jones03, wang96}).

Risk-sensitive RL has been studied widely in the literature, with focus on specific risk measures like expected exponential utility \cite{Borkar02QR}, variance related measures \cite{prashla13}, CVaR \cite{prashanth2014cvar,chow2017risk}, and CPT \cite{prashla16}. In this paper, instead of deriving algorithms that cater to specific risk measures, we consider the whole family of DRMs with smooth distortion functions. The risk-neutral RL approach gives equal importance to all the events, and hence an occasional high/low reward event gets equal priority as all other events. But using DRMs, we can give more emphasis to frequent events, while accounting for infrequent high severity events. As there is no universally accepted ideal risk measure, we may choose a risk measure which best fits our particular problem by picking an appropriate distortion function.

In this paper, we consider a risk-sensitive RL problem, in which an optimal policy is learned by maximizing the DRM of cumulative rewards in an episodic Markov decision process (MDP). We consider this problem in on-policy as well as off-policy settings, and employ the policy gradient solution approach.
The basis for a policy gradient algorithm is the expression for the gradient of the performance objective. In the risk-neutral case, such an expression is derived using the likelihood ratio (LR) method \cite{sutton_book}. We derive a DRM analogue to the policy gradient theorem.
In the case of DRM, policy gradient estimation is challenging since DRM of a given policy cannot be estimated using a sample mean. We formulate an LR-based estimation using the empirical distribution function (EDF) to approximate DRM, leading to a biased estimate of the DRM policy gradient. In contrast, policy gradient estimation is considerably simpler in a risk-neutral setting as the task is to estimate the mean cumulative reward, and using a sample mean leads to an unbiased gradient estimate.

We characterize the mean squared error (MSE) in DRM policy gradient estimates. In particular, we establish that the MSE is of order $O(\nicefrac{1}{m})$, where $m$ is the batch size (or the number of episodes). Using the DRM policy gradient expression, we propose two policy gradient algorithms which cater to on-policy and off-policy RL settings, respectively. To the best of our knowledge, we are first to derive a policy gradient theorem under a DRM objective, and devise/analyze policy gradient algorithms to optimize DRM in a RL context.


We provide bounds on the bias and variance of the DRM policy gradient estimates. Using these bounds,
 we establish that our algorithms converge to an approximate stationary point of the DRM objective at a rate of $O(\nicefrac{1}{\sqrt{N}})$. Here $N$ denotes the total number of iterations of the DRM policy gradient algorithm. Our algorithms require $O(\sqrt{N})$ episodes per iteration for both on-policy and off-policy RL settings.

\textit{Related work.}
In \cite{nv23}, the authors develop a general framework for optimization of any smooth risk measure which satisfy certain predefined conditions. It employs a zeroth-order optimization technique, specifically the smooth functional (SF) based estimator for estimating the gradient.
In \cite{tamar2015}, the authors consider a policy gradient algorithm for an abstract coherent risk measure. They derive a policy gradient theorem using the dual representation of a coherent risk measure. Next, using the EDF of the cumulative reward distribution, they propose an estimate of the policy gradient, and this estimation scheme requires solving a convex optimization problem. Finally, they establish asymptotic consistency of their proposed gradient estimate. In \cite{prashla16}, the authors consider a CPT-based objective in an RL setting. They employ a simultaneous perturbation stochastic approximation (SPSA) method for policy gradient estimation, and provide asymptotic convergence guarantees for their algorithm.
In \cite{prashla2021}, the authors survey policy gradient algorithms for optimizing different risk measures in a constrained as well as an unconstrained RL setting. In a non-RL context, the authors in \cite{glynn21} study the sensitivity of DRM using an estimator that is based on the generalized likelihood ratio method, and establish a central limit theorem for their gradient estimator.

In comparison to the aforementioned works, we would like to note the following aspects:\\
(i) For a smooth risk measure, which satisfy certain predefined conditions, \cite{nv23} employs an SF-based gradient estimation scheme. Since our algorithms focus solely on optimizing DRMs, we have derived an equivalent of the policy gradient theorem specifically tailored for DRMs. Furthermore, we have devised a LR-based approach for estimating DRM gradients.
While \cite{nv23} illustrates DRM optimization as an instance within a broader framework, the required number of episodes for achieving $O(\nicefrac{1}{\sqrt{N}})$ convergence varies. The SF-based gradient estimation method employs two trajectories, which might not be practical for all applications, especially those in on-policy RL settings. Conversely, the policy gradient theorem we introduce enables a single trajectory algorithm. Furthermore, our approach illustrates better sample complexity, implying that LR-based methods require fewer episodes to achieve $O(\nicefrac{1}{\sqrt{N}})$ convergence compared to SF-based methods. Our algorithms require $O(\sqrt{N})$ episodes per iteration for both on-policy and off-policy RL scenarios. In contrast, the algorithms proposed in \cite{nv23} necessitate $O(N)$ episodes per iteration for on-policy RL and $O(1)$ episodes per iteration for off-policy RL.\\
(ii) For an abstract coherent risk measure, \cite{tamar2015} uses gradient estimation scheme which requires solving a convex optimization sub-problem, whereas our algorithms can directly estimate the gradient from the samples without solving any optimization sub-problem. Thus our gradient estimation schemes are computationally inexpensive compared to the one in \cite{tamar2015}.\\
(iii) Using the DRM gradient estimate, we analyze policy gradient algorithms, and provide a convergence rate result of order $O(\nicefrac{1}{\sqrt{N}})$. But, the convergence guarantees in \cite{tamar2015} are asymptotic in nature.\\
(iv) In \cite{prashla16}, the guarantees for a policy gradient algorithm based on SPSA are asymptotic in nature, and is for CPT in an on-policy RL setting. CPT is also based on a distortion function, but the distortion function underlying CPT is neither concave nor convex, and hence, it is non-coherent.\\
(v) In \cite{prashla2021}, the authors derive a non-asymptotic bound of $O(\nicefrac{1}{N^{\nicefrac{1}{3}}})$ for an abstract smooth risk measure. They use abstract gradient oracles which satisfies certain bias-variance conditions. In contrast, we provide concrete gradient estimation schemes in RL settings, and our bounds feature an improved rate of $O(\nicefrac{1}{\sqrt{N}})$.

The rest of the paper is organized as follows: Section \ref{sec:pblm} describes the DRM-sensitive episodic MDP. Section \ref{sec:drm} introduces our proposed gradient estimation methods and corresponding algorithms. Section \ref{sec:main} introduces the non-asymptotic bounds for our algorithms, while Section \ref{sec:dpg} offers detailed proofs of convergence. Section \ref{sec:sim} presents empirical results from our proposed algorithms. Finally, Section \ref{sec:conclusions} provides the concluding remarks.
%
%
\section{Problem formulation}
\label{sec:pblm}
\subsection{Distortion risk measure (DRM)}
\label{subsec:drm}
Let $F_X$ denote the cumulative distribution function (CDF) of a r.v. $X$. The DRM of $X$ is defined using the Choquet integral of $X$ w.r.t. a distortion function $g$ as follows:
\begin{align*}
    \rho_g(X) =\!\!\int_{-\infty}^{0}\!\!\!\!(g(1-F_X(x))-1) dx + \!\int_{0}^{\infty}\!\!\!\!g(1-F_X(x))dx.
\end{align*}
The distortion function $g\!:\![0,1]\!\to\![0,1]$ is non-decreasing, with $g(0)\!=\!0$ and $g(1)\!=\!1$. Some examples of the distortion functions are given in Table \ref{tb:g}, and their plots in Figure \ref{fg:g}.
A distortion function applies varying importance to different segments of the distribution. If we use the identity function as a distortion function, the original distribution remains unaltered. Consequently, the DRM becomes equivalent to the expected value, as demonstrated below.
 \begin{align*}
     \rho_g(X) =\int_{-\infty}^{0} -F_X(x) dx + \int_{0}^{\infty} 1-F_X(x)dx=\E[X].
 \end{align*}
\begin{table}
    \caption{Examples of distortion functions}
    \label{tb:g}
    \begin{center}
        \begin{small}
                \begin{tabular}{ll}
                    \toprule
                    Dual-power function&$g(s)\!=\!1-(1-s)^{\lambda}$, ${\lambda}\geq 2$\\
                    Quadratic function&$g(s)\!=\!(1+{\lambda}) s- {\lambda} s^2$, $0\leq {\lambda}\leq 1$\\
                    Exponential function&$g(s)\!=\!\nicefrac{1-\exp(-{\lambda} s)}{1-\exp(-{\lambda} )}$, ${\lambda}\!>\!0$\\
                    Square-root function&$g(s)\!=\!\nicefrac{\sqrt{1+{\lambda} s}-1}{\sqrt{1+{\lambda}}-1}$, ${\lambda}>0$\\
                    Logarithmic function &$g(s)\!=\!\nicefrac{\log(1+{\lambda} s)}{\log(1+{\lambda} )}$, ${\lambda}>0$\\
                    \bottomrule
                \end{tabular}
        \end{small}
    \end{center}
\end{table}%
\begin{figure}
            \caption{Examples of distortion functions}
    \label{fg:g}
    \begin{center}
        \centerline{\includegraphics[width=0.8\columnwidth]{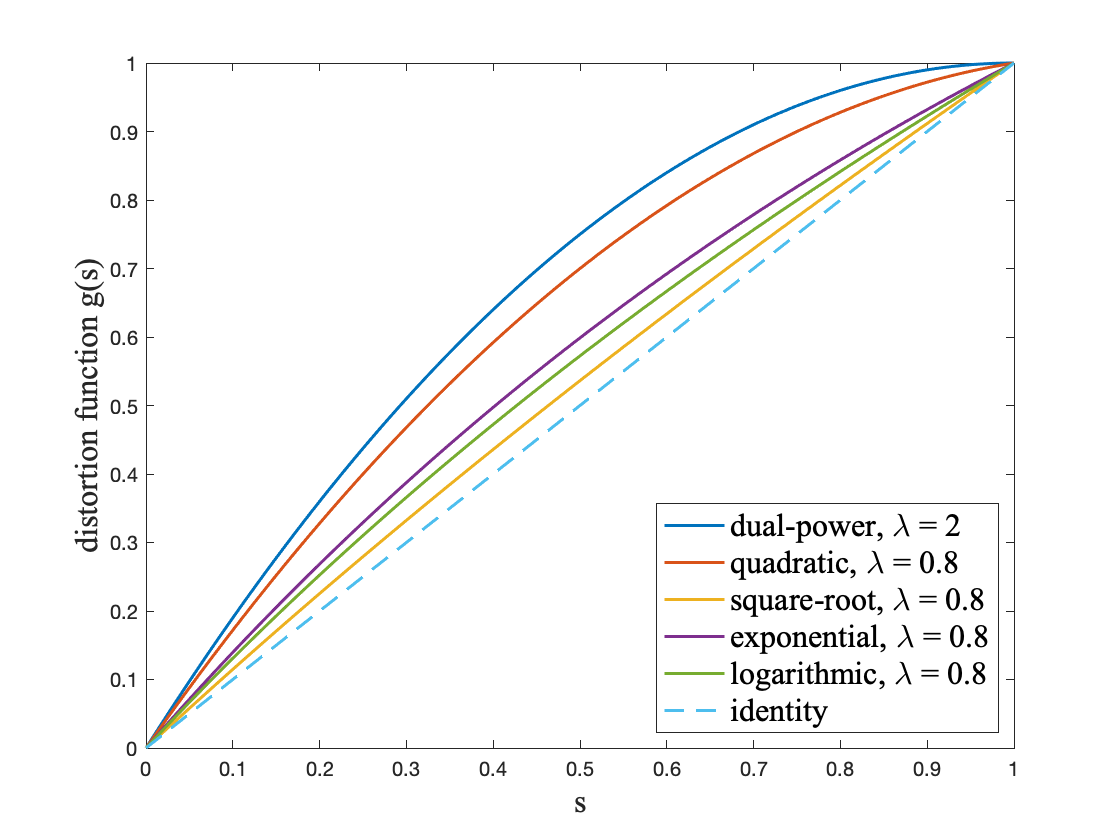}}
    \end{center}
    \vskip -0.2in
\end{figure}

The DRMs are well studied from an `attitude towards risk' perspective, and we refer the reader to \cite{dowd2006,balbas2009properties} for details. In this paper, we focus on `risk-sensitive decision making under uncertainty', with DRM as the chosen risk measure. We incorporate DRMs into a risk-sensitive RL framework, and the following section describes our problem formulation.
\subsection{DRM-sensitive MDP}
\label{subsec:drm_mdp}
We consider an MDP with a state space $\mathscr{S}$ and an action space $\mathscr{A}$. We assume that $\mathscr{S}$ and $\mathscr{A}$ are finite spaces. Let $r:\mathscr{S}\times\mathscr{A}\times\mathscr{S}\to [-r_{\textrm{max}},r_{\textrm{max}}], r_{\textrm{max}}\in\mathbb{R}^{+}$ be the scalar reward function, and $p:\mathscr{S}\times\mathscr{S}\times\mathscr{A} \to [0,1]$ be the transition probability function. The actions are selected using parameterized stochastic policies $\{\pi_\theta:\mathscr{S}\times\mathscr{A}\times\mathbb{R}^d\to[0,1],\theta\in\R^d\}$. We consider episodic problems, where each episode starts at a fixed state $S_0$, and terminates at a special absorbing state $0$. We denote by $S_t$ and $A_t$, the state and action at time $t\in\{0,1,\cdots\}$ respectively.
We assume that the policies $\{\pi_\theta,\theta\in\mathbb{R}^d\}$ are proper, i.e., they satisfy the following assumption:
\begin{assumption}
\label{as:proper}
$\exists M\!>\!0:\max\limits_{s\in\mathscr{S}} \mathbb{P}\left(S_M \!\neq\! 0 | S_0\!=\!s,\pi_\theta \right)\!<\!1,\forall \theta \in \R^d$.
\end{assumption}
The assumption \ref{as:proper} is frequently made in the analysis of episodic MDPs (cf. \cite{ndp_book}, Chapter 2).

The cumulative discounted reward $R^\theta$ is defined by
\begin{align}
R^\theta=\sum\nolimits_{t=0}^{T-1}\gamma^t r(S_t,A_t,S_{t+1}),\; \forall \theta \in \mathbb{R}^d,
\end{align}
where $A_t \sim \pi_\theta(\cdot, S_t)$, $S_{t+1}\sim p(\cdot,S_t,A_t)$, $\gamma \in (0,1)$, and $T$ is the random length of an episode.
Notice that $\forall \theta \in \mathbb{R}^d$, $\lvert R^\theta \rvert < \frac{r_{\textrm{max}}}{1-\gamma}$ a.s.
From \ref{as:proper}, we infer that $\E[T]<\infty$. This fact in conjunction with $T\geq0$ implies the following bound:
\begin{align}
    \label{eq:M_pi}
    \exists M_e >0 : T \leq M_e \textrm{ a.s}.
\end{align}
The DRM $\rho_g(\theta)$  is defined by
\begin{align}
\label{eq:rho_g_1}
\rho_g(\theta) = \!\!\int_{-M_r}^{0}\!\!\!\!(g(1\!-\!F_{R^{\theta}}(x))\!-\!1) dx + \!\!\int_{0}^{M_r}\!\!\!\!g(1\!-\!F_{R^{\theta}}(x))dx,
\end{align}
where $F_{R^{\theta}}$ is the CDF of $R^\theta$, and $M_r = \frac{r_{\textrm{max}}}{1-\gamma}$, or any problem specific tight upper bound for $\lvert R^\theta \rvert$.

Our goal is to find a $\theta^*$ which maximizes $\rho_g(\theta)$, i.e.,
\begin{align}
\label{eq:max_theta}
\theta^*\in\textrm{arg}\!\max_{\theta \in \mathbb{R}^d}  \rho_g(\theta).
\end{align}
\section{DRM policy gradient algorithms}
\label{sec:drm}
An iterative gradient-based algorithm can solve \eqref{eq:max_theta} using the following update iteration:
\begin{align}
    \label{eq:theta_update_1}
    \theta_{k+1} = \theta_k + \alpha \nabla \rho_g(\theta_k),
\end{align}
where $\theta_0$ is set arbitrarily, and  $\alpha$ is the step-size. Here $\nabla$ denotes the gradient w.r.t. $\theta$.

But, in a typical RL setting, we do not have direct measurements of the gradient $\nabla \rho_g(\cdot)$.
To overcome this difficulty, we derive a variant of the policy gradient theorem that caters to the DRM objective, and integrate it with an LR-based gradient estimation scheme. In the following sections, we describe our policy gradient algorithms in on-policy and off-policy RL settings, respectively.
\subsection{DRM policy gradient}
\label{subsec:drm_pg}
In this section, we present a DRM analogue to the policy gradient theorem under the following assumptions:
\begin{assumption}
    \label{as:nabla_logpi}
    $\exists M_d >0: \;\forall \theta\in \R^d, \left\lVert\nabla\log \pi_{\theta}(a| s)\right\lVert \leq M_d, \forall a \in \mathscr{A}, s \in \mathscr{S}$, where $\lVert \cdot \rVert$ is the $d$-dimensional Euclidean norm.
\end{assumption}
\begin{assumption}
    \label{as:g'_bound}
    $\exists M_{g'} >0: \;\forall t\in(0,1), \left\lvert g'(t) \right \rvert \leq M_{g'}$.
\end{assumption}
 The assumptions \ref{as:nabla_logpi}-\ref{as:g'_bound} ensure the boundedness of the DRM policy gradient. An assumption like \ref{as:nabla_logpi} is common to the analysis of policy gradient algorithms (cf. \cite{zhangK2020,papini2018}). A few examples of distortion functions, which satisfy \ref{as:g'_bound} are given in Table \ref{tb:g}.

For deriving a policy gradient theorem variant with DRM as the objective, we first express the CDF $F_{R^{\theta}}(\cdot)$, and its gradient $\nabla F_{R^{\theta}}(\cdot)$  as expectations w.r.t the episodes from the policy $\pi_\theta$. Starting with
\begin{align}
	\label{eq:F_R_pi}
	F_{R^{\theta}}(x) = \E\left[\1\{R^{\theta}\leq x\}\right],
\end{align}
we obtain an expression for $\nabla F_{R^{\theta}}(x)$ in the lemma below, and the proof is available in Section \ref{sec:dpg}.
\begin{lemma}
	\label{lm:nablaFG}
	$\forall x \in(-M_r,M_r)$,
	\begin{align}
		\label{eq:nabla_F_R}
		\nabla F_{R^{\theta}}(x) \!=\! \E\left[\1\{R^{\theta}\!\leq\! x\}\!\sum\nolimits_{t=0}^{T-1}\nabla\log \pi_{\theta}(A_t\mid S_t)\right]
	\end{align}
\end{lemma}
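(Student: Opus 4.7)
The plan is to apply the likelihood ratio (LR) technique to shift the $\theta$-dependence from the random variable $R^\theta$ onto the trajectory measure. First I would write the expectation defining $F_{R^\theta}$ explicitly as a sum over trajectories. Let $\tau = (S_0, A_0, S_1, A_1, \ldots, A_{T-1}, S_T)$ (with $S_T=0$) denote a trajectory, and let $R(\tau) = \sum_{t=0}^{T-1}\gamma^t r(S_t,A_t,S_{t+1})$ be the cumulative discounted reward, viewed as a deterministic function of $\tau$. Its law under policy $\pi_\theta$ is
\[
p_\theta(\tau) \;=\; \prod_{t=0}^{T-1}\pi_\theta(A_t\mid S_t)\, p(S_{t+1}\mid S_t,A_t).
\]
Then \eqref{eq:F_R_pi} rewrites as $F_{R^\theta}(x)=\sum_\tau p_\theta(\tau)\,\1\{R(\tau)\leq x\}$, with the critical point that $R(\tau)$ and the indicator no longer depend on $\theta$ once the sum is written this way.

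Next I would differentiate termwise. Under \ref{as:proper} we have $T\leq M_e$ a.s.\ by \eqref{eq:M_pi}, and since $\mathscr{S},\mathscr{A}$ are finite, only finitely many trajectories arise. So the interchange of $\nabla$ and $\sum_\tau$ is trivial. Applying the log-derivative trick $\nabla p_\theta(\tau)=p_\theta(\tau)\,\nabla\log p_\theta(\tau)$ gives
\[
\nabla F_{R^\theta}(x)=\sum_\tau p_\theta(\tau)\,\1\{R(\tau)\leq x\}\,\nabla\log p_\theta(\tau).
\]

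Finally, because the transition kernel $p(\cdot\mid s,a)$ does not depend on $\theta$, $\log p_\theta(\tau)$ splits as $\sum_{t=0}^{T-1}\log\pi_\theta(A_t\mid S_t)$ plus a $\theta$-independent term, and hence $\nabla\log p_\theta(\tau)=\sum_{t=0}^{T-1}\nabla\log\pi_\theta(A_t\mid S_t)$. Substituting and rewriting the trajectory sum as an expectation yields \eqref{eq:nabla_F_R}.

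The only conceptual hurdle is that one cannot directly differentiate $\1\{R^\theta\leq x\}$ with respect to $\theta$, since the indicator is discontinuous in its argument; the LR reformulation circumvents this entirely by transferring $\theta$ onto the measure. Once that move is made, the remaining work is bookkeeping: the finiteness guaranteed by \ref{as:proper} plus finite $\mathscr{S},\mathscr{A}$ justifies the gradient-sum swap without any dominated-convergence argument, and assumption \ref{as:nabla_logpi} (though not needed for the identity itself) will be what keeps the resulting expectation finite for later use.
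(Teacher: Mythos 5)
Your proposal is correct and follows essentially the same route as the paper: rewrite $F_{R^\theta}(x)$ as a finite sum over trajectories weighted by $\p_\theta(\omega)$, differentiate termwise, and apply the likelihood-ratio identity $\nabla\p_\theta(\omega)/\p_\theta(\omega)=\sum_{t}\nabla\log\pi_\theta(A_t\mid S_t)$ using the $\theta$-independence of the indicator and the transition kernel. The only cosmetic difference is that the paper dresses the interchange step as an application of dominated convergence while you correctly observe that finiteness of the trajectory space makes it immediate.
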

We now state the DRM policy gradient theorem below. The reader is referred to Section \ref{sec:dpg} for a proof.
\begin{theorem}(DRM policy gradient)
    \label{thm:nabla_rho_g}
    Assume \ref{as:proper}-\ref{as:g'_bound}. Then the gradient of the DRM in \eqref{eq:rho_g_1} is given by
    \begin{align}
    \label{eq:nabla_rho_g_1}
    \nabla\rho_g(\theta)=-\int_{-M_r}^{M_r} g'(1-F_{R^{\theta}}(x)) \nabla F_{R^{\theta}}(x)dx.
    \end{align}
\end{theorem}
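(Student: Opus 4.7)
The plan is to compute $\nabla\rho_g(\theta)$ by differentiating under the integral sign in \eqref{eq:rho_g_1}, applying the chain rule to $g(1-F_{R^\theta}(x))$, and substituting the expression for $\nabla F_{R^\theta}$ from Lemma \ref{lm:nablaFG}. Both integrals in \eqref{eq:rho_g_1} are over finite intervals (on $[-M_r,0]$ and $[0,M_r]$ respectively), so once the interchange of $\nabla$ and $\int$ is justified, the result follows by a one-line chain-rule computation: the constant $-1$ in the first integrand drops out, and
\[
\nabla g(1-F_{R^\theta}(x)) = -\,g'(1-F_{R^\theta}(x))\,\nabla F_{R^\theta}(x),
\]
so summing the two differentiated integrals gives the single integral over $[-M_r,M_r]$ claimed in \eqref{eq:nabla_rho_g_1}.

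The one nontrivial step is justifying the interchange of gradient and integral, which I would handle via the Leibniz rule (equivalently, dominated convergence on difference quotients). I need a $\theta$-uniform, $x$-integrable bound on $\|\nabla g(1-F_{R^\theta}(x))\|$ over $[-M_r,M_r]$. Using Lemma \ref{lm:nablaFG}, the triangle inequality, and assumption \ref{as:nabla_logpi},
\[
\|\nabla F_{R^\theta}(x)\| \le \E\!\left[\sum_{t=0}^{T-1}\|\nabla\log\pi_\theta(A_t\mid S_t)\|\right] \le M_d\,\E[T].
\]
By \eqref{eq:M_pi} (a consequence of \ref{as:proper}), $T\le M_e$ a.s., so $\E[T]\le M_e$. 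Combining with \ref{as:g'_bound} yields
\[
\|\nabla g(1-F_{R^\theta}(x))\| \le M_{g'} M_d M_e,
\]
uniformly in $x\in[-M_r,M_r]$ and $\theta\in\R^d$. Since this bound is a constant integrated over a compact interval, it is finite, which legitimizes exchanging $\nabla$ and $\int$ in both pieces of \eqref{eq:rho_g_1}.

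The main obstacle, then, is really just verifying the measurability/integrability conditions needed to invoke Leibniz's rule; the algebraic content is routine once Lemma \ref{lm:nablaFG} is in hand. I would present the proof in two short steps: (i) state and verify the dominating bound above, citing \ref{as:proper}--\ref{as:g'_bound} and Lemma \ref{lm:nablaFG}; (ii) pass $\nabla$ inside both integrals in \eqref{eq:rho_g_1}, apply the chain rule, and merge the two integrals over $[-M_r,0]$ and $[0,M_r]$ into the single integral over $[-M_r,M_r]$ to obtain \eqref{eq:nabla_rho_g_1}.
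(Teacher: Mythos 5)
Your proposal is correct and follows essentially the same route as the paper: interchange $\nabla$ and the integral via dominated convergence (Leibniz), then apply the chain rule, with the interchange justified by the uniform bound $\lVert g'(1-F_{R^{\theta}}(x))\nabla F_{R^{\theta}}(x)\rVert \leq M_{g'}M_eM_d$ on the compact interval $[-M_r,M_r]$. The only cosmetic difference is that you derive the bound $\lVert\nabla F_{R^{\theta}}(x)\rVert\leq M_eM_d$ directly from Lemma \ref{lm:nablaFG}, whereas the paper packages it as Lemma \ref{lm:nablaF_bound}; the content is identical.
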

We make the following additional assumptions to ensure the smoothness of the DRM $\rho_g(\theta)$.
\begin{assumption}
    \label{as:nabla_logpi_2}
    $\exists M_{h}>0: \forall \theta\in \R^d, \left\lVert\nabla^2\log \pi_{\theta}(a| s)\right\lVert\leq M_h, \forall a\in \mathscr{A}, s \in \mathscr{S}$, where $\lVert \cdot \rVert$ is the operator norm.
\end{assumption}
\begin{assumption}
    \label{as:g'_bound_2}
    $\exists M_{g''}>0: \;\forall t\in(0,1)$, $ \left\lvert g''(t) \right\rvert \leq M_{g''}$.
\end{assumption}
An assumption like \ref{as:nabla_logpi_2} is common in literature for the non-asymptotic analysis of the policy gradient algorithms (cf. \cite{zhangK2020,shen2019hessian}). A few examples of distortion functions, which satisfy \ref{as:g'_bound_2} are given in Table \ref{tb:g}. Since $g(\cdot)$ is bounded by definition, we can see that any $g(\cdot)$ that satisfies \ref{as:g'_bound_2} will satisfy  \ref{as:g'_bound}. Smoothness assumptions align with (A3) and (A5) can be observed in the literature when optimizing risk measures. For instance, in \cite{tamar2015}, the assumption regarding the risk envelope ensures the smoothness of the constraints. Additionally, in \cite{prashla16}, the authors assume that the CPT weight functions exhibit $H\ddot{o}lder$ continuity, Lipschitz continuity, or local Lipschitz continuity, and that the utility functions are continuous and strictly increasing, or that the first moments are bounded.

The following result establishes that the DRM $\rho_g(\theta)$ is smooth in the parameter $\theta$. The reader is referred to Section \ref{sec:dpg} for a proof.
\begin{lemma}
    \label{lm:nabla_rho_lip}
    Assume \ref{as:proper}-\ref{as:g'_bound_2}. Then $\forall \theta_1,\theta_2 \in \mathbb{R}^d$,
    \begin{align*}
      &\left\lVert \nabla\rho_g(\theta_1) \!-\! \nabla \rho_g(\theta_2) \right\rVert \leq
        L_{\rho'} \left\lVert  \theta_1  \!-\! \theta_2\right\rVert,\\
    &\textrm{where }
    L_{\rho'}=2M_r M_e \left( M_h M_{g'}+M_eM_d^2 (M_{g'}+ M_{g''})\right).
        \end{align*}
\end{lemma}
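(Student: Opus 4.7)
The plan is to start from the expression for $\nabla \rho_g(\theta)$ given by Theorem 1, write the difference $\nabla \rho_g(\theta_1) - \nabla \rho_g(\theta_2)$ as an integral over $x \in [-M_r, M_r]$, and bound the integrand pointwise by a constant multiple of $\|\theta_1 - \theta_2\|$; the $2 M_r$ factor in $L_{\rho'}$ then appears from integration. The pointwise bound will be obtained via the classical add-and-subtract decomposition
\begin{align*}
&g'(1{-}F_{R^{\theta_1}}(x))\nabla F_{R^{\theta_1}}(x) - g'(1{-}F_{R^{\theta_2}}(x))\nabla F_{R^{\theta_2}}(x)\\
&= \bigl[g'(1{-}F_{R^{\theta_1}}(x)) - g'(1{-}F_{R^{\theta_2}}(x))\bigr]\nabla F_{R^{\theta_1}}(x)\\
&\quad + g'(1{-}F_{R^{\theta_2}}(x))\bigl[\nabla F_{R^{\theta_1}}(x) - \nabla F_{R^{\theta_2}}(x)\bigr],
\end{align*}
which reduces the problem to (i) Lipschitz continuity of $F_{R^{\theta}}(x)$ in $\theta$, (ii) Lipschitz continuity of $\nabla F_{R^{\theta}}(x)$ in $\theta$, together with the uniform bounds $|g'|\le M_{g'}$ and $|g''|\le M_{g''}$ from \ref{as:g'_bound}--\ref{as:g'_bound_2}.

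The key preparatory step is to control $\nabla F_{R^\theta}(x)$ and $\nabla^2 F_{R^\theta}(x)$ uniformly in $x$ and $\theta$. Using Lemma \ref{lm:nablaFG} together with \ref{as:nabla_logpi} and the a.s.\ bound $T \leq M_e$ from \eqref{eq:M_pi}, I get immediately
\begin{align*}
\lVert \nabla F_{R^\theta}(x)\rVert \le \E\Bigl[\sum_{t=0}^{T-1}\lVert \nabla \log \pi_\theta(A_t|S_t)\rVert\Bigr] \le M_e M_d.
\end{align*}
To obtain a Hessian bound, I will differentiate inside the expectation using the likelihood-ratio identity $\nabla P_\theta(\tau)=P_\theta(\tau)\nabla\log P_\theta(\tau)$, where $\nabla \log P_\theta(\tau)=\sum_{t=0}^{T-1}\nabla\log\pi_\theta(A_t|S_t)$, to get
\begin{align*}
\nabla^2 F_{R^\theta}(x) = \E\bigl[\1\{R^\theta\le x\}\bigl(G_\theta G_\theta^\top + H_\theta\bigr)\bigr],
\end{align*}
with $G_\theta = \sum_{t=0}^{T-1}\nabla\log\pi_\theta(A_t|S_t)$ and $H_\theta = \sum_{t=0}^{T-1}\nabla^2\log\pi_\theta(A_t|S_t)$. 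By \ref{as:nabla_logpi}, \ref{as:nabla_logpi_2} and \eqref{eq:M_pi}, this gives $\lVert \nabla^2 F_{R^\theta}(x)\rVert \le M_e^2 M_d^2 + M_e M_h$. A mean value theorem argument then converts these gradient bounds into the Lipschitz estimates $\lvert F_{R^{\theta_1}}(x)-F_{R^{\theta_2}}(x)\rvert \le M_e M_d\lVert\theta_1-\theta_2\rVert$ and $\lVert\nabla F_{R^{\theta_1}}(x)-\nabla F_{R^{\theta_2}}(x)\rVert \le (M_e^2 M_d^2 + M_e M_h)\lVert\theta_1-\theta_2\rVert$.

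Plugging these into the add-and-subtract decomposition gives the integrand bound
\begin{align*}
M_{g''}(M_e M_d)(M_e M_d) + M_{g'}(M_e^2 M_d^2 + M_e M_h),
\end{align*}
times $\lVert\theta_1-\theta_2\rVert$, which simplifies to $M_e\bigl(M_h M_{g'} + M_e M_d^2(M_{g'}+M_{g''})\bigr)\lVert\theta_1-\theta_2\rVert$. Integrating over $x\in[-M_r,M_r]$ multiplies by $2M_r$ and yields exactly $L_{\rho'}$ as claimed. I expect the main technical obstacle to be the careful justification of differentiating under the expectation twice (interchanging $\nabla^2$ and $\E$); this will rely on the uniform $T\le M_e$ bound, the uniform bounds in \ref{as:nabla_logpi} and \ref{as:nabla_logpi_2}, the finiteness of $\mathscr{S}$ and $\mathscr{A}$, and dominated convergence, the same ingredients that already justify Lemma \ref{lm:nablaFG}; the remaining algebra is routine.
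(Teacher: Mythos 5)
Your proposal is correct and follows essentially the same route as the paper: the same gradient/Hessian bounds on $F_{R^\theta}$ obtained by differentiating the likelihood ratio under the expectation, the same mean-value-theorem Lipschitz estimates, and the same add-and-subtract decomposition of the integrand (the paper attaches $g'(1-F_{R^{\theta_1}}(x))$ to the gradient difference and $\nabla F_{R^{\theta_2}}(x)$ to the $g'$ difference, the mirror image of your pairing, but since all bounds are uniform in $\theta$ this yields the identical constant $L_{\rho'}$).
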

\subsection{DRM optimization}
\label{subsec:lr}
In the following sections, we describe gradient algorithms that use \eqref{eq:nabla_rho_g_1} to derive DRM gradient estimates.
\subsubsection{On-policy DRM optimization}
\label{subsubsec:lr_onpolicy}
We generate $m$ episodes using the policy $\pi_\theta$, and estimate $F_{R^{\theta}}(\cdot)$ and $\nabla F_{R^{\theta}}(\cdot)$ using sample averages.
We denote by $R^{\theta}_i$ the cumulative reward, and $T^i$ the length of the episode $i$. Also, we denote by  $A_t^i$ and $S_t^i$ the action and state at time $t$ in episode $i$, respectively.
Let $G^m_{R^{\theta}}(\cdot)$ denote the EDF of $F_{R^{\theta}}(\cdot)$, and is defined by
\begin{align}
\label{eq:G}
G^m_{R^{\theta}}(x) = \frac{1}{m}\sum\limits_{i=1}^m \1\{R^{\theta}_i\leq x\}.
\end{align}
We form the estimate $\widehat{\nabla}G^m_{R^{\theta}}(\cdot)$ of $\nabla F_{R^{\theta}}(\cdot)$ as follows:
\begin{align}
 \label{eq:nabla_G}
 \widehat{\nabla}G^m_{R^{\theta}}(x) \!=\! \frac{1}{m}\!\sum\limits_{i=1}^m\1\{R^{\theta}_i\!\leq\! x\}\!\!\sum\limits_{t=0}^{T^i-1}\nabla\log \pi_{\theta}(A_t^i | S_t^i).
\end{align}
Using the estimates from \eqref{eq:G} and \eqref{eq:nabla_G}, we estimate the gradient $\nabla\rho_g(\theta)$ in \eqref{eq:nabla_rho_g_1} as follows:
\begin{align}
    \label{eq:hat_nabla_rho_G}
    \widehat{\nabla}^G\rho_g(\theta)=-\int_{-M_r}^{M_r} g'(1-G^m_{R^{\theta}}(x)) \widehat{\nabla}G^m_{R^{\theta}}(x)dx.
\end{align}
Using order statistics of $m$ samples $\{R^\theta_i\}_{i=1}^m$, we can compute the integral in \eqref{eq:hat_nabla_rho_G} as given in the lemma below.
The reader is referred to Appendix \ref{sec:conv_aux} for a proof.
\begin{lemma}
    \label{lm:hat_nabla_rho_G}
    \begin{align}
        \label{eq:hat_nabla_rho_G1}
        &\widehat{\nabla}^G\rho_g(\theta)\!=\!\frac{1}{m}\sum_{i=1}^{m-1} \left( R^\theta_{(i)}\!-\!R^\theta_{(i+1)}\right) g'\left(1-\frac{i}{m}\right) \sum_{j=1}^i \nabla l^{\theta}_{(j)}\nonumber\\
        &\quad+ \frac{1}{m}\left(R^\theta_{(m)} -M_r\right) g'_{+}(0)\sum\nolimits_{j=1}^m\nabla l^{\theta}_{(j)}.
    \end{align}
\end{lemma}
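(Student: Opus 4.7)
The plan is to exploit the fact that both $G^m_{R^{\theta}}(\cdot)$ and $\widehat{\nabla}G^m_{R^{\theta}}(\cdot)$ are step functions whose jumps occur precisely at the sample points $R^\theta_1,\ldots,R^\theta_m$. After ordering these points as $R^\theta_{(1)} \leq R^\theta_{(2)} \leq \cdots \leq R^\theta_{(m)}$ (with $\nabla l^\theta_{(j)} := \sum_{t=0}^{T^{(j)}-1}\nabla\log\pi_{\theta}(A^{(j)}_t\mid S^{(j)}_t)$ denoting the log-likelihood gradient of the episode whose return is the $j$-th order statistic), the integrand in \eqref{eq:hat_nabla_rho_G} becomes piecewise constant, and the integral reduces to a finite sum.

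First I would partition $[-M_r,M_r]$ into the subintervals $[-M_r,R^\theta_{(1)})$, $[R^\theta_{(i)},R^\theta_{(i+1)})$ for $i=1,\ldots,m-1$, and $[R^\theta_{(m)},M_r]$. On each of these subintervals I would read off the value of $G^m_{R^{\theta}}$ and $\widehat{\nabla}G^m_{R^{\theta}}$ directly from the definitions \eqref{eq:G} and \eqref{eq:nabla_G}: on $[-M_r,R^\theta_{(1)})$ both are zero so the contribution vanishes; on $[R^\theta_{(i)},R^\theta_{(i+1)})$ exactly $i$ of the indicator terms fire, giving $G^m_{R^{\theta}}(x) = i/m$ and $\widehat{\nabla}G^m_{R^{\theta}}(x) = \frac{1}{m}\sum_{j=1}^{i}\nabla l^\theta_{(j)}$; on $[R^\theta_{(m)},M_r]$ all $m$ indicators fire, giving $G^m_{R^{\theta}}(x)=1$ and $\widehat{\nabla}G^m_{R^{\theta}}(x) = \frac{1}{m}\sum_{j=1}^{m}\nabla l^\theta_{(j)}$.

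Next, because the integrand is constant on each piece, each integral is just (length of subinterval) $\times$ (constant value). Summing these contributions and applying the outer minus sign in \eqref{eq:hat_nabla_rho_G} yields
\[
\widehat{\nabla}^G\rho_g(\theta) = -\sum_{i=1}^{m-1}\!\bigl(R^\theta_{(i+1)}\!-\!R^\theta_{(i)}\bigr) g'\!\Bigl(1\!-\!\tfrac{i}{m}\Bigr)\tfrac{1}{m}\!\sum_{j=1}^{i}\nabla l^\theta_{(j)} - \bigl(M_r\!-\!R^\theta_{(m)}\bigr) g'_+(0)\tfrac{1}{m}\!\sum_{j=1}^{m}\nabla l^\theta_{(j)},
\]
which, after flipping the sign of each difference, gives exactly \eqref{eq:hat_nabla_rho_G1}.

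There is no substantial obstacle here; the proof is essentially bookkeeping. The only subtlety worth flagging is the rightmost piece: on $[R^\theta_{(m)},M_r]$ the argument of $g'$ equals $0$, which lies on the boundary of the domain of $g'$, so one must interpret this value as the right derivative $g'_+(0)$. Possible ties among the $R^\theta_i$ are harmless, since any subinterval $[R^\theta_{(i)},R^\theta_{(i+1)})$ of zero length simply contributes zero to the sum.
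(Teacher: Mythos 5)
Your proposal is correct and follows essentially the same route as the paper's proof: both rewrite $G^m_{R^{\theta}}$ and $\widehat{\nabla}G^m_{R^{\theta}}$ as step functions in terms of the order statistics, partition $[-M_r,M_r]$ at the points $R^\theta_{(1)},\ldots,R^\theta_{(m)}$, and evaluate the integral piecewise as length times constant, with the same handling of $g'_{+}(0)$ on the rightmost interval.
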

In the above, $R^\theta_{(i)}$ is the $i^{th}$ smallest order statistic from the samples $\{R^\theta_i\}_{i=1}^m$, and\\ $\nabla l^{\theta}_{(i)}= \sum_{t=0}^{T^{(i)}-1}\nabla\!\log \pi_{\theta}(A_t^{(i)} | S_t^{(i)})$, with $T^{(i)}$ denoting the length, and $S_t^{(i)}$ and $A_t^{(i)}$ the state and action at time $t$ of the episode corresponding to $R^\theta_{(i)}$. Here $g'_{+}(0)$ is the right derivative of the distortion function $g$ at $0$. 

The gradient estimator in \eqref{eq:hat_nabla_rho_G} is biased since $\E[g'(1-G^m_{R^{\theta}}(\cdot))]\neq g'(1-F_{R^{\theta}}(\cdot))$. However, the bias can be controlled by increasing the number of episodes $m$. A bound for the MSE of this estimator is given below. The reader is referred to Section \ref{sec:dpg} for a proof.
\begin{lemma}
    \label{lm:biasG}
    \begin{align*}
        \E\!\left\lVert\widehat{\nabla}^G \rho_g(\theta) \!-\! \nabla\rho_g(\theta)\right\rVert^2 \leq \frac{32 M_r^2M_e^2M_d^2(e^2M_{g'}^2 + M_{g''}^2)}{m}.
    \end{align*}
\end{lemma}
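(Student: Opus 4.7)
The plan is to decompose the error via an add-and-subtract trick and then bound each piece using the boundedness assumptions on $g'$, $g''$, and the score function. First I would write
\begin{align*}
\widehat{\nabla}^G \rho_g(\theta) - \nabla \rho_g(\theta) &= A + B, \\
A &= -\!\!\int_{-M_r}^{M_r}\! g'(1-G^m_{R^\theta}(x))\bigl[\widehat{\nabla} G^m_{R^\theta}(x) - \nabla F_{R^\theta}(x)\bigr]\, dx, \\
B &= -\!\!\int_{-M_r}^{M_r}\! \bigl[g'(1-G^m_{R^\theta}(x)) - g'(1-F_{R^\theta}(x))\bigr] \nabla F_{R^\theta}(x)\, dx.
\end{align*}
Applying $\|A+B\|^2 \leq 2\|A\|^2 + 2\|B\|^2$ then reduces the task to controlling $\E\|A\|^2$ and $\E\|B\|^2$ separately.

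For $\E\|A\|^2$, I would use assumption~\ref{as:g'_bound} ($|g'|\leq M_{g'}$) together with the integral Cauchy--Schwarz inequality $\bigl\|\int_{-M_r}^{M_r} f(x)\,dx\bigr\|^2 \leq 2M_r \int_{-M_r}^{M_r}\|f(x)\|^2\,dx$ to reduce the problem to bounding $\int_{-M_r}^{M_r} \E\|\widehat{\nabla} G^m_{R^\theta}(x) - \nabla F_{R^\theta}(x)\|^2\,dx$. By Lemma~\ref{lm:nablaFG}, the integrand is the variance of an average of $m$ iid random vectors of the form $\1\{R^\theta_i \leq x\}\sum_{t=0}^{T^i-1}\nabla\log\pi_\theta(A_t^i\mid S_t^i)$, each of which has norm at most $M_e M_d$ by \eqref{eq:M_pi} and assumption~\ref{as:nabla_logpi}. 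This yields a pointwise bound of order $M_e^2 M_d^2/m$, and hence $\E\|A\|^2 = O\bigl(M_r^2 M_e^2 M_d^2 M_{g'}^2/m\bigr)$.

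For $\E\|B\|^2$, the mean-value theorem combined with assumption~\ref{as:g'_bound_2} gives $|g'(1-G^m_{R^\theta}(x)) - g'(1-F_{R^\theta}(x))| \leq M_{g''}|G^m_{R^\theta}(x) - F_{R^\theta}(x)|$, while Lemma~\ref{lm:nablaFG} supplies the pointwise almost-sure bound $\|\nabla F_{R^\theta}(x)\| \leq M_e M_d$. Integral Cauchy--Schwarz then reduces everything to controlling $\int \E[(G^m_{R^\theta}(x) - F_{R^\theta}(x))^2]\,dx$. Since $m\,G^m_{R^\theta}(x)$ is binomial with mean $mF_{R^\theta}(x)$, each pointwise term is at most $1/(4m)$, yielding $\E\|B\|^2 = O\bigl(M_r^2 M_e^2 M_d^2 M_{g''}^2/m\bigr)$.

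The main subtlety lies in the choice of decomposition itself: the estimator $\widehat{\nabla}^G \rho_g(\theta)$ is biased because $\E[g'(1-G^m_{R^\theta}(\cdot))]\neq g'(1-F_{R^\theta}(\cdot))$, so one cannot simply compute a single variance. The add-and-subtract above isolates the bias contribution entirely into the Lipschitz factor of $g'$ (handled through $M_{g''}$ in $B$) while leaving a genuinely centered iid average in $A$. Once this split is in place, the remaining steps are standard variance estimates that combine to give the claimed $O(1/m)$ rate; the precise constant $32(e^2 M_{g'}^2 + M_{g''}^2)$ presumably comes from a slightly sharper uniform bound on the $g'(1-G^m)$ factor in $A$ than the crude $M_{g'}$ used above, but this refinement does not affect the order of the estimate.
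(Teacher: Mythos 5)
Your proof is correct and rests on the same add-and-subtract decomposition as the paper's (up to which factor carries the full $g'$ term and which carries the difference --- both variants work since $g'$, $\widehat{\nabla}G^m_{R^\theta}$, and $\nabla F_{R^\theta}$ are all uniformly bounded), followed by the same integral Cauchy--Schwarz and $\lVert x+y\rVert^2\le 2\lVert x\rVert^2+2\lVert y\rVert^2$ steps. Where you genuinely diverge is in the two concentration estimates. The paper views $m'\bigl(G^{m'}_{R^\theta}(x)-F_{R^\theta}(x)\bigr)$ and $m'\bigl(\widehat{\nabla}G^{m'}_{R^\theta}(x)-\nabla F_{R^\theta}(x)\bigr)$ as bounded martingales and invokes Azuma--Hoeffding (the vector version of \cite{hayes2005large} for the gradient term), then integrates the tail to get $\E\lvert G^m-F\rvert^2\le \nicefrac{4}{m}$ and $\E\lVert\widehat{\nabla}G^m-\nabla F\rVert^2\le \nicefrac{4e^2M_e^2M_d^2}{m}$; you instead exploit that the $m$ episodes are i.i.d.\ and compute the variance of the empirical averages directly, giving $F(x)(1-F(x))/m\le\nicefrac{1}{4m}$ and $\nicefrac{M_e^2M_d^2}{m}$. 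Your route is more elementary and yields a strictly sharper constant, roughly $\nicefrac{2M_r^2M_e^2M_d^2(4M_{g'}^2+M_{g''}^2)}{m}$, which implies the stated bound since $8\le 32e^2$; the paper's martingale machinery buys nothing extra here beyond the tail inequalities it reuses elsewhere, and is precisely the source of the $e^2$ in the advertised constant. Your closing guess that the $e^2$ comes from a sharper handling of the $g'(1-G^m)$ factor is therefore off the mark --- it is an artifact of the dimension-free vector Azuma--Hoeffding bound --- but this does not affect the validity of your argument. One small citation slip: the a.s.\ bound $\lVert\nabla F_{R^\theta}(x)\rVert\le M_eM_d$ is Lemma \ref{lm:nablaF_bound}, not Lemma \ref{lm:nablaFG}, though it does follow immediately from the latter together with \ref{as:nabla_logpi} and \eqref{eq:M_pi}.
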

We solve \eqref{eq:max_theta} using the following update iteration:
\begin{align}
\label{eq:approx_theta_update_G_lr}
\theta_{k+1} = \theta_k + \alpha  \widehat{\nabla}^G\!\rho_g(\theta_k).
\end{align}
Algorithm \ref{alg:drm_onP_lr} presents the pseudocode of DRM-OnP-LR.
\begin{algorithm}[t]
    \caption{DRM-OnP-LR}
    \label{alg:drm_onP_lr}
    \begin{algorithmic}[1]
        \STATE \textbf{Input}: Parameterized form of the policy $\pi_\theta$, iteration limit $N$, step-size $\alpha$, and batch size $m$;
        \STATE \textbf{Initialize}: Policy $\theta_{0} \!\in\! \R^d$, discount factor $\gamma \!\in\! (0,1)$;
        \FOR {$k=0,\hdots, N-1$ }
        \STATE Generate $m$ episodes each using $\pi_{\theta_k}$;
        \STATE Use \eqref{eq:hat_nabla_rho_G1} to estimate $\widehat{\nabla}^{G}\!\rho_g(\theta_k)$;
        \STATE Use \eqref{eq:approx_theta_update_G_lr} to calculate $\theta_{k+1}$;
        \ENDFOR
        \STATE \textbf{Output}: Policy $\theta_R$, where $R\sim\mathcal{U}\{0, N-1\}$.
    \end{algorithmic}
\end{algorithm}
\subsubsection{Off-policy DRM optimization}
\label{subsubsec:lr_offpolicy}
In an off-policy RL setting, we optimize the DRM of $R^\theta$ from the episodes generated by a behavior policy $b$, using the importance sampling (IS) ratio. We require the behavior policy $b$ to be proper, i.e.,
\begin{assumption}
    \label{as:b_proper}
$\exists M > 0:\; \max_{s\in\mathscr{S}}\mathbb{P}\!\left(S_M \neq 0 \mid S_0=s, b \right)<1$.
\end{assumption}
We also assume that the target policy $\pi_\theta$ is absolutely continuous w.r.t. the behavior policy $b$, i.e.,
\begin{assumption}
 \label{as:b_pol}
$\forall \theta \! \in \R^d, b(a | s)\!=\!0 \Rightarrow \pi_\theta(a | s)\!=\!0,\forall a \in \mathscr{A}, \forall s \!\in \mathscr{S}$.
\end{assumption}
Assumption \ref{as:b_pol} is standard in an off-policy RL setting (cf. \cite{sutton08}).

The cumulative discounted reward $R^b$ is defined by
\begin{align}
\label{eq:Rb}
R^b=\sum\nolimits_{t=0}^{T-1}\gamma^t r(S_t,A_t,S_{t+1}),
\end{align}
where $A_t \sim b(\cdot, S_t)$, $S_{t+1}\sim p(\cdot,S_t,A_t)$, $\gamma \in (0,1)$, and $T$ is the random length of an episode. As before, \ref{as:b_proper} implies $\E[T]<\infty$ and the following bound:
    \begin{align}
            \label{eq:M_b}
            \exists M_e >0 : T \leq M_e, \textrm{ a.s}.
        \end{align}
The importance sampling ratio $\psi^\theta$ is defined by
\begin{align}
\label{eq:psi}
\psi^\theta = \prod\nolimits_{t=0}^{T-1}\frac{\pi_{\theta}(A_t\mid S_t)}{b(A_t\mid S_t)}.
\end{align}
    From \ref{as:nabla_logpi} and \ref{as:b_pol}, we obtain $\forall \theta \in \R^d,\pi_{\theta}(a|s)>0$ and $b(a|s) >0$, $\forall a \in \mathscr{A}, \textrm{ and } \forall s \in \mathscr{S}$. This fact in conjunction with \eqref{eq:M_b} implies the following bound for $\psi^\theta$:
    \begin{align}
     \label{eq:is_ratio}
    \exists M_s>0 : \forall \theta\in\R^d, \psi^\theta \leq M_s, \textrm{ a.s}.
    \end{align}
We express the CDF $F_{R^{\theta}}(\cdot)$, and its gradient $\nabla F_{R^{\theta}}(\cdot)$  as expectations w.r.t. the episodes from the policy $b$. Starting with
\begin{align}
    \label{eq:F_R_b}
   F_{R^{\theta}}(x) = \E\left[\1\{R^b\leq x\}\psi^\theta  \right],
   \end{align}
we obtain the following analogue of Lemma \ref{lm:nablaFG} for the off-policy case. 
The reader is referred to Appendix \ref{sec:conv_lr_offp} for a proof.
\begin{lemma}
    \label{lm:nablaFH}
    $\forall x \in(-M_r,M_r)$,
    \begin{align}
        \label{eq:nabla_F_R_b}
        \nabla F_{R^{\theta}}(x) = \E\!\left[\1\{R^b\!\leq\! x\}\psi^\theta  \sum\limits_{t=0}^{T-1}\nabla\log \pi_{\theta}(A_t|S_t)\right]
    \end{align}
\end{lemma}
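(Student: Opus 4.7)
The plan is to start from the off-policy identity \eqref{eq:F_R_b}, namely $F_{R^{\theta}}(x) = \E[\1\{R^b\leq x\}\psi^\theta]$, where the expectation is taken over trajectories generated by the behavior policy $b$. The crucial observation is that, unlike the on-policy case of Lemma \ref{lm:nablaFG}, the sampling distribution here does not depend on $\theta$: only the importance ratio $\psi^\theta$ does, and the indicator $\1\{R^b \leq x\}$ is $\theta$-free since $R^b$ is determined by the $b$-trajectory. Consequently, once the interchange of $\nabla$ and $\E$ is justified, one has $\nabla F_{R^{\theta}}(x) = \E[\1\{R^b \leq x\}\nabla \psi^\theta]$, and the rest is the log-derivative trick.

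My first step would be to justify \eqref{eq:F_R_b}. Writing $F_{R^{\theta}}(x) = \E_{\pi_\theta}[\1\{R^\theta \leq x\}]$ as a sum over trajectories $\tau$, the product $P_b(\tau)\psi^\theta(\tau)$ telescopes to $P_{\pi_\theta}(\tau)$ because the transition kernel $p$ is common to both policies, so the two expectations agree. The second step is the log-derivative computation:
\begin{align*}
\nabla \psi^\theta = \psi^\theta \nabla \log \psi^\theta = \psi^\theta \sum_{t=0}^{T-1}\nabla \log \pi_\theta(A_t\mid S_t),
\end{align*}
where the $b$-terms drop out of $\nabla \log \psi^\theta$ since they do not depend on $\theta$. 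Plugging this into the interchanged expression yields the claimed identity.

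The main technical obstacle is rigorously interchanging $\nabla$ and $\E$. For this I would invoke dominated convergence. The integrand satisfies $|\1\{R^b \leq x\}\psi^\theta| \leq M_s$ by \eqref{eq:is_ratio}. For its gradient, assumption \ref{as:nabla_logpi} gives $\|\nabla \log \pi_\theta(A_t|S_t)\| \leq M_d$, and the almost-sure bound $T \leq M_e$ from \eqref{eq:M_b} gives $\|\nabla \log \psi^\theta\| \leq M_e M_d$; combined with \eqref{eq:is_ratio} this yields $\|\nabla(\1\{R^b \leq x\}\psi^\theta)\| \leq M_s M_e M_d$ almost surely and uniformly in $\theta$. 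This $\theta$-uniform, integrable dominating function legitimizes the interchange, after which the computation above completes the proof. No other obstacles are anticipated; the argument is noticeably cleaner than in Lemma \ref{lm:nablaFG} because the differentiation sees only the importance ratio rather than the sampling measure itself.
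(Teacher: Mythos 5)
Your proposal is correct, and it reaches the identity by a slightly different organization of the same likelihood-ratio computation. The paper starts from the on-policy representation, reuses the intermediate step \eqref{eq:B_FG} from the proof of Lemma \ref{lm:nablaFG} (so the interchange of $\nabla$ and $\E$ is performed on $\nabla\E_{\pi_\theta}[\1\{R^\theta\le x\}]$), and only afterwards changes measure by writing $\nabla\p_\theta(\omega)=\frac{\nabla\p_\theta(\omega)}{\p_b(\omega)}\p_b(\omega)=\psi^\theta(\omega)\sum_{t}\nabla\log\pi_\theta(A_t(\omega)\mid S_t(\omega))\,\p_b(\omega)$. You instead change measure first, via \eqref{eq:F_R_b}, and then differentiate under the $\theta$-free measure $\p_b$, so that only the integrand $\1\{R^b\le x\}\psi^\theta$ carries the $\theta$-dependence and the log-derivative trick is applied to $\psi^\theta$ alone. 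The two key identities, $\nabla\p_\theta/\p_b$ versus $\nabla\psi^\theta$, are algebraically the same object, but your ordering makes the dominated-convergence justification marginally cleaner: the dominating bound $M_sM_eM_d$ from \eqref{eq:is_ratio}, \ref{as:nabla_logpi}, and \eqref{eq:M_b} is uniform in $\theta$ and refers to a fixed probability space, whereas the paper leans on the finiteness of $\Omega$ and the argument already made for Lemma \ref{lm:nablaFG}. One point worth making explicit in your write-up (the paper records it just after \eqref{eq:psi}) is that \ref{as:nabla_logpi} together with \ref{as:b_pol} forces $\pi_\theta(a\mid s)>0$ and $b(a\mid s)>0$ everywhere, so $\psi^\theta>0$ a.s.\ and the step $\nabla\psi^\theta=\psi^\theta\nabla\log\psi^\theta$ is legitimate; otherwise both proofs are complete and equivalent.
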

We generate $m$ episodes using the policy $b$ to estimate $F_{R^{\theta}}(\cdot)$ and $\nabla F_{R^{\theta}}(\cdot)$ using sample averages. We denote by $R^b_i$ the cumulative reward, $\psi^\theta_i$ the IS ratio.

We form the estimate $H^m_{R^{b}}(\cdot)$ of $F_{R^{\theta}}(\cdot)$ as follows:
\begin{align}
    &H^m_{R^{\theta}}(x) = \min\{\hat{H}^m_{R^{\theta}}(x),1\}, \textrm{ where} \label{eq:H}\\
    &\hat{H}^m_{R^{\theta}}(x) = \frac{1}{m}\sum\nolimits_{i=1}^m\1\{R^{b}_i\leq x\}\psi^\theta_i.\label{eq:hatH}
\end{align}
The importance sampling ratio in \eqref{eq:hatH} can set $\hat{H}^m_{R^{\theta}}(x)$ a value above $1$. Since we are estimating a CDF, we restrict $\hat{H}^m_{R^{\theta}}(x)$ to one in $H^m_{R^{\theta}}(x)$. 

We form the estimate $\widehat{\nabla}H^m_{R^{b}}(\cdot)$ of $\nabla F_{R^{\theta}}(\cdot)$ as follows:
\begin{align}
\label{eq:nabla_H}
\widehat{\nabla}H^m_{R^{\theta}}(x) \!=\! \frac{1}{m}\!\sum\limits_{i=1}^m\!\1\{R^{b}_i \!\leq\! x\} \psi^\theta_i\!\sum\limits_{t=0}^{T^i\!-\!1} \!\!\nabla\log \pi_{\theta}(A_t^i | S_t^i)\!
\end{align}
Using \eqref{eq:H} and \eqref{eq:nabla_H}, we estimate $\nabla\rho_g(\theta)$ by
\begin{align}
    \label{eq:hat_nabla_rho_H}
    \widehat{\nabla}^{H}\!\rho_g(\theta)=-\int_{-M_r}^{M_r} g'(1-H^m_{R^{\theta}}(x)) \widehat{\nabla}H^m_{R^{\theta}}(x)dx.
\end{align}

As in the on-policy case, the integral in \eqref{eq:hat_nabla_rho_H} can be computed using order statistics of the samples $\{R^b_i\}_{i=1}^m$, as given in the lemma below. 
The reader is referred to Appendix \ref{sec:conv_aux} for a proof.
\begin{lemma}
    \label{lm:hat_nabla_rho_H}
    \begin{align}
        \label{eq:hat_nabla_rho_H1}
        &\widehat{\nabla}^H\!\rho_g(\theta)
        =\frac{1}{m}\sum\nolimits_{i=1}^{m-1} \left(\left( R^b_{(i)}-R^b_{(i+1)}\right) \right.\nonumber\\
        &\quad\left.\times g'\left(1\!-\! min\left\{1,\frac{1}{m}\sum\nolimits_{j=1}^{i}\psi^\theta_{(j)}\right\}\right)\! \sum\nolimits_{j=1}^i \!\nabla l^{\theta}_{(j)}\psi^\theta_{(j)}\right)\nonumber\\
        &\quad+  \frac{1}{m}\left(R^b_{(m)} -M_r\right) g'_{+}(0)\sum\nolimits_{j=1}^m\nabla l^{\theta}_{(j)}\psi^\theta_{(j)}.
    \end{align}
\end{lemma}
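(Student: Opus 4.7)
My plan is a direct piecewise integration of the definition \eqref{eq:hat_nabla_rho_H}, mirroring the order-statistic argument used for Lemma \ref{lm:hat_nabla_rho_G} but carrying the IS weights through each step. The key observation is that both $\hat H^m_{R^\theta}(\cdot)$ and $\widehat\nabla H^m_{R^\theta}(\cdot)$ are right-continuous step functions whose jumps occur exactly at the observed returns $R^b_1,\ldots,R^b_m$; the integrand $g'(1 - H^m_{R^\theta}(x))\,\widehat\nabla H^m_{R^\theta}(x)$ is therefore piecewise constant, so the integral over $[-M_r, M_r]$ collapses to a finite sum of length-times-value contributions.

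Concretely, I would first reorder the samples into order statistics $R^b_{(1)} \leq \cdots \leq R^b_{(m)}$ and carry along the associated IS ratios $\psi^\theta_{(i)}$ and log-policy-gradient sums $\nabla l^\theta_{(i)}$. On each interval $[R^b_{(i)}, R^b_{(i+1)})$ with $1 \leq i \leq m-1$, the two functions take the constant values
\[
\hat H^m_{R^\theta}(x) = \tfrac{1}{m}\sum_{j=1}^{i}\psi^\theta_{(j)}, \qquad \widehat\nabla H^m_{R^\theta}(x) = \tfrac{1}{m}\sum_{j=1}^{i}\psi^\theta_{(j)}\,\nabla l^\theta_{(j)}.
\]
Splitting the integral at the order statistics, the leftmost piece $[-M_r, R^b_{(1)})$ contributes zero (both factors vanish), each middle piece contributes its length $(R^b_{(i+1)} - R^b_{(i)})$ times the constant integrand (with the clip $\min\{1,\cdot\}$ from \eqref{eq:H} passing through into the argument of $g'$), and the outer minus sign in \eqref{eq:hat_nabla_rho_H} converts $-(R^b_{(i+1)}-R^b_{(i)})$ into $(R^b_{(i)} - R^b_{(i+1)})$. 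This reproduces the first sum in \eqref{eq:hat_nabla_rho_H1} term by term. The tail piece $[R^b_{(m)}, M_r]$ has length $M_r - R^b_{(m)}$; there $H^m_{R^\theta} = \min\{\hat H^m_{R^\theta}, 1\} = 1$, so $g'(1 - H^m_{R^\theta}) = g'_+(0)$, and after the sign flip this yields the second term.

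The only delicate point is the treatment of the tail interval: the collapse to $g'_+(0)$ uses $\hat H^m_{R^\theta}(x) \geq 1$ for $x \geq R^b_{(m)}$, i.e., $\tfrac{1}{m}\sum_{j=1}^m \psi^\theta_{(j)} \geq 1$, which is precisely what the clipping step in \eqref{eq:H} secures (in expectation, $\tfrac{1}{m}\sum_j \psi^\theta_{(j)} = 1$). Everything else is routine bookkeeping with order statistics and reindexing, completely parallel to the on-policy computation behind Lemma \ref{lm:hat_nabla_rho_G}.
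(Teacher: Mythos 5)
Your decomposition is the same as the paper's: the paper rewrites $H^m_{R^{\theta}}$ and $\widehat{\nabla}H^m_{R^{\theta}}$ as step functions constant on the order-statistic intervals (its displays \eqref{eq:H2} and \eqref{eq:nabla_H2}) and then integrates piece by piece, exactly as you propose, so the bookkeeping in your middle intervals and the sign flip from the outer minus are all in order and reproduce the first sum of \eqref{eq:hat_nabla_rho_H1}.

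The one place your reasoning fails is the tail interval $[R^b_{(m)}, M_r]$. You assert that the clipping in \eqref{eq:H} ``secures'' $\tfrac{1}{m}\sum_{j=1}^m \psi^\theta_{(j)} \geq 1$; it does not. The operation $\min\{\hat{H}^m_{R^{\theta}}(x),1\}$ only caps the estimate from above, and the fact that $\tfrac{1}{m}\sum_{j}\psi^\theta_{j}$ equals one in expectation says nothing about its realized value on a given batch: the total importance weight can be strictly less than $m$, in which case $H^m_{R^{\theta}}(x)=\tfrac{1}{m}\sum_{j=1}^m\psi^\theta_{(j)}<1$ for $x\geq R^b_{(m)}$ and the tail integrand is $g'\bigl(1-\tfrac{1}{m}\sum_{j=1}^m\psi^\theta_{(j)}\bigr)$ rather than $g'_{+}(0)$. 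The paper's own proof obtains the $g'_{+}(0)$ term by fiat, setting $H^m_{R^{\theta}}(x)=1$ for $x\geq R^b_{(m)}$ in \eqref{eq:H2}, i.e., by adopting the convention that the estimated CDF equals one beyond the largest sample. Your computation lands on the stated formula, but the justification you give for the final term is not valid: you should either state that convention explicitly, or observe that a literal reading of \eqref{eq:H} would put $g'\bigl(1-\min\{1,\tfrac{1}{m}\sum_{j=1}^m\psi^\theta_{(j)}\}\bigr)$ in place of $g'_{+}(0)$ in the last term.
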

In the above, $R^b_{(i)}$ is the $i^{th}$ smallest order statistic from the samples $\{R^b_i\}_{i=1}^m$, and $\psi^\theta_{(i)}$ is the importance sampling ratio corresponding to $R^b_{(i)}$. Also, $\nabla l^{\theta}_{(i)}= \sum_{t=0}^{T^{(i)}-1}\nabla\!\log \pi_{\theta}(A_t^{(i)} | S_t^{(i)})$, with $T^{(i)}$ denoting the length, and $S_t^{(i)}$ and $A_t^{(i)}$ are the state and action at time $t$ of the episode corresponding to $R^b_{(i)}$.

As in the on-policy case, the estimator in \eqref{eq:hat_nabla_rho_H} is biased, but can be controlled by increasing the number of episodes $m$. A bound on the MSE of our estimator is given below.
The reader is referred to Section \ref{subsec:offp_lr} for a proof.
\begin{lemma}
    \label{lm:biasH}
    \begin{align*}
        &\E\left\lVert\widehat{\nabla}^H \rho_g(\theta) -\nabla\rho_g(\theta)\right\rVert^2\\
        &\qquad\leq \frac{32 M_r^2M_s^2M_e^2M_d^2(e^2M_{g'}^2 + M_{g''}^2M_s^2)}{m}.
    \end{align*}
\end{lemma}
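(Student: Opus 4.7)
The strategy is to mirror the on-policy MSE computation in Lemma \ref{lm:biasG}, tracking the importance sampling ratio $\psi^\theta$ (bounded by $M_s$ via \eqref{eq:is_ratio}) throughout. Writing $F$ for $F_{R^\theta}$ and $H^m$ for $H^m_{R^\theta}$ for brevity, I would first subtract \eqref{eq:nabla_rho_g_1} from the integral form of \eqref{eq:hat_nabla_rho_H} and apply the add-subtract decomposition
\begin{align*}
&g'(1-H^m)\widehat{\nabla}H^m - g'(1-F)\nabla F\\
&\quad= g'(1-F)\bigl(\widehat{\nabla}H^m - \nabla F\bigr)\\
&\qquad+ \bigl(g'(1-H^m)-g'(1-F)\bigr)\widehat{\nabla}H^m,
\end{align*}
which decouples the nonlinearity of $g'$ from the stochastic gradient. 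Cauchy--Schwarz over the interval of length $2M_r$ together with $(a+b)^2 \le 2(a^2+b^2)$ then reduce the task to bounding the pointwise MSE of each piece.

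For the first piece, Lemma \ref{lm:nablaFH} says $\widehat{\nabla}H^m(x)$ is the IID sample average of $X_i := \1\{R^b_i\le x\}\psi^\theta_i \nabla l^{\theta}_i$ with mean $\nabla F(x)$; since $\lVert X_i\rVert \le M_s M_e M_d$ a.s.\ by \eqref{eq:is_ratio}, \eqref{eq:M_b}, and \ref{as:nabla_logpi}, while $|g'(1-F)| \le M_{g'}$ by \ref{as:g'_bound}, the standard IID variance computation yields a pointwise bound of order $M_{g'}^2 M_s^2 M_e^2 M_d^2/m$. For the second piece, the mean value theorem with \ref{as:g'_bound_2} gives $|g'(1-H^m) - g'(1-F)| \le M_{g''}|H^m - F|$; the truncation $H^m = \min(\hat H^m, 1)$ is non-expansive against $F \le 1$, so $|H^m - F| \le |\hat H^m - F|$; and by \eqref{eq:F_R_b}, $\hat H^m(x)$ is an unbiased IID average with each summand bounded by $M_s$, yielding $\E|\hat H^m - F|^2 \le M_s^2/m$. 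Combining with the deterministic bound $\lVert \widehat{\nabla} H^m\rVert \le M_s M_e M_d$ produces a pointwise bound of order $M_{g''}^2 M_s^4 M_e^2 M_d^2/m$ for the second piece.

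Integrating these pointwise bounds over $[-M_r, M_r]$ and gathering the constants from Cauchy--Schwarz produces the $M_r^2$ factor and the $M_{g'}^2 + M_{g''}^2 M_s^2$ structure that appears in the stated bound. The $e^2$ factor multiplying $M_{g'}^2$ (matching the on-policy constant in Lemma \ref{lm:biasG}) emerges from a sharper estimate in the variance step, using the same Hoeffding/exponential-moment argument already exploited in the on-policy proof. The main technical obstacle is that $H^m$ and $\widehat{\nabla}H^m$ share the same $m$ episodes and are therefore strongly correlated; the add-subtract decomposition is precisely what disentangles them so the standard IID variance calculation applies, while the non-expansivity of the truncation step is what prevents clipping to one from degrading the rate relative to the on-policy case.
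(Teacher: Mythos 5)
Your proposal is correct and follows essentially the same route as the paper's proof: the identical add--subtract decomposition, Cauchy--Schwarz over the length-$2M_r$ interval, the observation that the clipping $H^m_{R^{\theta}} = \min\{\hat H^m_{R^{\theta}},1\}$ can only decrease the deviation from $F_{R^{\theta}}\le 1$, and the bound $M_s$ on the importance sampling ratio carried through every term. The only cosmetic difference is that the paper obtains the pointwise second-moment bounds by integrating Azuma--Hoeffding tails (whence the $e^2$ factor you correctly trace to the vector concentration inequality of Hayes), whereas you first invoke a direct IID variance computation of the same order.
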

We solve \eqref{eq:max_theta} using the following update iteration:
\begin{align}
    \label{eq:approx_theta_update_H_lr}
    \theta_{k+1} = \theta_k + \alpha  \widehat{\nabla}^{H}\!\rho_g(\theta_k).
\end{align}
The pseudocode of DRM-OffP-LR algorithm is similar to Algorithm \ref{alg:drm_onP_lr}, except that we generate episodes using the policy $b$, and use \eqref{eq:hat_nabla_rho_H1} and  \eqref{eq:approx_theta_update_H_lr} in place of \eqref{eq:hat_nabla_rho_G1} and \eqref{eq:approx_theta_update_G_lr},  respectively.

%
\section{Main results}
\label{sec:main}
%
Our non-asymptotic analysis establishes a bound on the number of iterations of our algorithms to find an $\epsilon$-stationary point of the DRM, which is defined below.
\begin{definition}(\textbf{$\epsilon$-stationary point})
\label{def:esolution}
Let $\theta_R$ be the output of an algorithm. Then, $ \theta_R $ is called an $ \epsilon$-stationary point of problem \eqref{eq:max_theta}, if $\,\E\left\Vert \nabla \rho_g \left( \theta_R \right) \right\rVert^2 \leq \epsilon$.
\end{definition}

In an RL setting, the DRM objective need not be convex. Hence, we establish the convergence of  our proposed algorithms to an $\epsilon$-stationary point. Such an approach is common in the risk-neutral setting as well, cf. \cite{papini2018,shen2019hessian}.

We derive convergence rate of our algorithms for a random iterate $\theta_R$, that is chosen uniformly at random from the policy parameters $\{\theta_0,\cdots,\theta_{N-1}\}$. 
We provide a convergence rate for the algorithm DRM-OnP-LR and DRM-OffP-LR below. 
The proofs are available in Section \ref{sec:dpg} and  Appendix \ref{sec:conv_lr_offp}, respectively.
\begin{theorem}(DRM-OnP-LR)
    \label{tm:drm_onP_lr}
    Assume \ref{as:proper}-\ref{as:g'_bound_2}.  Let $\{\theta_i\}_{i=0}^{N-1}$ be the policy parameters generated by DRM-OnP-LR using \eqref{eq:approx_theta_update_G_lr}, and let $\theta_R$ be chosen uniformly at random from this set. Then,
    \begin{align}
       \label{eq:tm_drm_onP_lr}
        &\E\left\lVert \nabla \rho_g(\theta_R)\right\rVert^2
        \leq \frac{2 \left(\rho_g^* - \rho_g(\theta_{0})\right)}{N \alpha} \nonumber\\
        & + 4 M_r^2M_e^2M_d^2 \left(\alpha M_{g'}^2L_{\rho'} + \frac{8(e^2M_{g'}^2 + M_{g''}^2)}{m}\right).
       \end{align}
    In the above, $\rho_g^*=\max_{\theta\in\mathbb{R}^d}\rho_g(\theta)$, $M_r = \frac{r_{\textrm{max}}}{1-\gamma}$, and $L_{\rho'}$ is as in Lemma \ref{lm:nabla_rho_lip}.
    The constants $M_d, M_{g'}, M_{g''}$, and $M_h$ are as defined in \ref{as:nabla_logpi}-\ref{as:g'_bound_2}, while $M_e$ is an upper bound on the episode length from \eqref{eq:M_pi}.
\end{theorem}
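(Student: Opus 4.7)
The plan is to combine the $L_{\rho'}$-smoothness from Lemma \ref{lm:nabla_rho_lip} with the MSE bound of Lemma \ref{lm:biasG} through a standard descent-lemma argument, adapted to accommodate the biased estimator $\widehat{\nabla}^G\rho_g$. First I would apply smoothness in its ascent form to the update $\theta_{k+1}=\theta_k+\alpha\widehat{\nabla}^G\rho_g(\theta_k)$ to obtain
\begin{align*}
\rho_g(\theta_{k+1})\geq \rho_g(\theta_k)&+\alpha\langle\nabla\rho_g(\theta_k),\widehat{\nabla}^G\rho_g(\theta_k)\rangle\\
&-\tfrac{L_{\rho'}\alpha^2}{2}\bigl\|\widehat{\nabla}^G\rho_g(\theta_k)\bigr\|^2,
\end{align*}
and then control the last term via the deterministic estimate $\|\widehat{\nabla}^G\rho_g(\theta)\|\leq 2M_r M_e M_d M_{g'}$, which follows directly from the integral representation \eqref{eq:hat_nabla_rho_G} using $|g'|\leq M_{g'}$, $\|\nabla\log\pi_\theta\|\leq M_d$, and $T\leq M_e$ a.s.\ from assumptions \ref{as:proper}, \ref{as:nabla_logpi}, and \ref{as:g'_bound}.

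Next, decomposing $\widehat{\nabla}^G\rho_g(\theta_k)=\nabla\rho_g(\theta_k)+e_k$ and applying Young's inequality $\langle\nabla\rho_g(\theta_k),e_k\rangle\geq -\tfrac{1}{2}\|\nabla\rho_g(\theta_k)\|^2-\tfrac{1}{2}\|e_k\|^2$ to the cross term, I would obtain the per-iteration inequality
\begin{align*}
\tfrac{\alpha}{2}\|\nabla\rho_g(\theta_k)\|^2&\leq \rho_g(\theta_{k+1})-\rho_g(\theta_k)+\tfrac{\alpha}{2}\|e_k\|^2\\
&\quad+2L_{\rho'}\alpha^2 M_r^2 M_e^2 M_d^2 M_{g'}^2.
\end{align*}
Taking expectations, substituting the MSE bound of Lemma \ref{lm:biasG} for $\E\|e_k\|^2$, summing over $k=0,\dots,N-1$, and dividing by $N$ telescopes the $\rho_g(\theta_{k+1})-\rho_g(\theta_k)$ terms; using $\E[\rho_g(\theta_N)]\leq\rho_g^{*}$ converts the telescoped term into $\tfrac{2(\rho_g^{*}-\rho_g(\theta_0))}{N\alpha}$, while the other two pieces combine into the constant-plus-$1/m$ term of \eqref{eq:tm_drm_onP_lr}. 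Since $R\sim\mathcal{U}\{1,N\}$, the left-hand side equals $\E\|\nabla\rho_g(\theta_R)\|^2$, completing the argument.

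The main obstacle is the bias of $\widehat{\nabla}^G\rho_g$: a vanilla SGD-style analysis would rely on $\E[e_k\mid\theta_k]=0$, but here $\E[g'(1-G^m_{R^\theta})]\neq g'(1-F_{R^\theta})$, and the bias persists at every finite $m$. Young's inequality sidesteps this by trading the zero-mean requirement for the MSE bound, at the cost of a factor $1/2$ on the gradient-norm term and an explicit $O(1/m)$ contribution. This decomposition cleanly exposes the step-size/batch-size tradeoff appearing in \eqref{eq:tm_drm_onP_lr}: the $\alpha M_{g'}^2 L_{\rho'}$ term is the smoothness correction, while the $1/m$ term is inherited from Lemma \ref{lm:biasG}. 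The truly delicate work is upstream (the derivation of the $O(1/m)$ MSE, where the EDF $G^m_{R^\theta}$ must be shown to concentrate on $F_{R^\theta}$ uniformly enough to propagate through the integral defining $\widehat{\nabla}^G\rho_g$); within Theorem \ref{tm:drm_onP_lr} itself, everything reduces to bookkeeping once that bound is in hand.
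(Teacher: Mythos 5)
Your proposal is correct and follows essentially the same route as the paper: an $L_{\rho'}$-smoothness (descent-lemma) inequality for the ascent update, Young's inequality on the cross term to trade the bias of $\widehat{\nabla}^G\rho_g$ against the MSE bound of Lemma \ref{lm:biasG}, a bound of $4M_r^2M_{g'}^2M_e^2M_d^2$ on $\|\widehat{\nabla}^G\rho_g(\theta_k)\|^2$ (the paper's Lemma \ref{lm:varG}, which you obtain by the same almost-sure estimate), and a telescoping sum with $\theta_R$ drawn uniformly. The constants match the stated bound exactly, so the argument goes through as written.
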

We specialize the bound in \eqref{eq:tm_drm_onP_lr} to a particular choice of step-size $\alpha$, and batch size $m$ in the corollary below.
\begin{corollary}
    \label{cr:drm_onP_lr}
    Set $\alpha=\frac{1}{\sqrt{N}}$, and $m=\sqrt{N}$. Then, under the conditions of Theorem \ref{tm:drm_onP_lr}, we have
    \begin{align*}
        &\E\left[\left\lVert\nabla\rho_g(\theta_R)\right\rVert^2\right]\leq \frac{2 \left(\rho_g^* - \rho_g(\theta_{0})\right)}{\sqrt{N}} \\
        &\qquad+ \frac{4 M_r^2M_e^2M_d^2 \left(M_{g'}^2L_{\rho'} + 8(e^2M_{g'}^2 + M_{g''}^2)\right)}{\sqrt{N}}.
    \end{align*}
\end{corollary}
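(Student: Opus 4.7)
The corollary is a direct specialization of the bound in Theorem \ref{tm:drm_onP_lr}, so my plan is simply to substitute $\alpha = 1/\sqrt{N}$ and $m = \sqrt{N}$ into \eqref{eq:tm_drm_onP_lr} and collect terms. There is no new analysis required; the theorem already does all the heavy lifting by balancing the optimization error (which scales like $1/(N\alpha)$), the smoothness/step-size error (which scales like $\alpha$), and the gradient-estimation bias-variance error (which scales like $1/m$). The choice $\alpha = 1/\sqrt{N}$, $m = \sqrt{N}$ is precisely the one that equates all three $N$-dependent scalings at the common rate $1/\sqrt{N}$.

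More concretely, first I would rewrite the first summand on the right-hand side of \eqref{eq:tm_drm_onP_lr} as $2(\rho_g^* - \rho_g(\theta_0))/(N\alpha) = 2(\rho_g^* - \rho_g(\theta_0))/\sqrt{N}$. Next, inside the large bracket multiplying $4 M_r^2 M_e^2 M_d^2$, the step-size-dependent term becomes $\alpha M_{g'}^2 L_{\rho'} = M_{g'}^2 L_{\rho'}/\sqrt{N}$, and the batch-size-dependent term becomes $8(e^2 M_{g'}^2 + M_{g''}^2)/m = 8(e^2 M_{g'}^2 + M_{g''}^2)/\sqrt{N}$. Factoring out the common $1/\sqrt{N}$ and combining with the leading term yields exactly the expression stated in the corollary.

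No step poses any real obstacle. The only thing to verify is that the constants from Lemmas \ref{lm:nabla_rho_lip} and \ref{lm:biasG} (namely $L_{\rho'}$ and the MSE constant) do not depend on $N$, $\alpha$, or $m$, which is immediate from their definitions. If desired, one could also note that since all constants $M_r, M_e, M_d, M_{g'}, M_{g''}, M_h$ are finite under \ref{as:proper}--\ref{as:g'_bound_2}, the entire bound is of order $O(1/\sqrt{N})$, which matches the convergence rate claimed in the introduction and justifies the $O(\sqrt{N})$ episode-per-iteration requirement.
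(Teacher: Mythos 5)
Your proof is correct and coincides with the paper's (implicit) argument: the corollary is obtained by directly substituting $\alpha = \nicefrac{1}{\sqrt{N}}$ and $m = \sqrt{N}$ into the bound \eqref{eq:tm_drm_onP_lr} and factoring out the common $\nicefrac{1}{\sqrt{N}}$. Your additional observation that the constants are independent of $N$, $\alpha$, and $m$ is a harmless sanity check and nothing more is needed.
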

\begin{theorem}(DRM-OffP-LR)
\label{tm:drm_offP_lr}
Assume \ref{as:proper}-\ref{as:b_pol}. Let $\{\theta_i\}_{i=0}^{N-1}$ be the policy parameters generated by DRM-OffP-LR using \eqref{eq:approx_theta_update_H_lr}, and let $\theta_R$ be chosen uniformly at random from this set. Then,
    \begin{align}
    \label{eq:tm_drm_offP_lr}
    &\E\left[\left\lVert \nabla \rho_g(\theta_R)\right\rVert^2\right]
      \leq \frac{2 \left(\rho_g^* - \rho_g(\theta_{0})\right)}{N \alpha}\nonumber\\
&\!+\! 4 M_r^2M_s^2M_e^2M_d^2 \!\left(\!\!\alpha M_{g'}^2L_{\rho'} \!+\! \frac{8(e^2M_{g'}^2 \!+\! M_{g''}^2M_s^2)}{m}\!\!\right)\!\!
\end{align}
    where $\rho_g^*$, $M_r $, $L_{\rho'}$, $M_d, M_{g'}$, and $M_{g''}$ are as defined in Theorem \ref{tm:drm_onP_lr}. The constant $M_s$ is an upper bound on the importance sampling ratio from \eqref{eq:is_ratio}, and $M_e$ is an upper bound on the episode length from \eqref{eq:M_b}.
\end{theorem}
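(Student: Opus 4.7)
The plan is to mimic the standard descent-lemma argument for non-convex stochastic gradient ascent, adapted to a biased gradient oracle. Concretely, I would start from the smoothness of $\rho_g$ established in Lemma \ref{lm:nabla_rho_lip}. Applied to the update $\theta_{k+1} = \theta_k + \alpha \widehat{\nabla}^H\rho_g(\theta_k)$, smoothness gives
\begin{align*}
\rho_g(\theta_{k+1}) \geq \rho_g(\theta_k) + \alpha \langle \nabla\rho_g(\theta_k), \widehat{\nabla}^H\rho_g(\theta_k)\rangle - \tfrac{\alpha^2 L_{\rho'}}{2}\lVert \widehat{\nabla}^H\rho_g(\theta_k)\rVert^2.
\end{align*}
Writing $e_k = \widehat{\nabla}^H\rho_g(\theta_k) - \nabla\rho_g(\theta_k)$ and applying the elementary inequality $\langle a, a+e\rangle \geq \tfrac{1}{2}\lVert a\rVert^2 - \tfrac{1}{2}\lVert e\rVert^2$ produces the bound
\begin{align*}
\rho_g(\theta_{k+1}) \geq \rho_g(\theta_k) + \tfrac{\alpha}{2}\lVert \nabla\rho_g(\theta_k)\rVert^2 - \tfrac{\alpha}{2}\lVert e_k\rVert^2 - \tfrac{\alpha^2 L_{\rho'}}{2}\lVert \widehat{\nabla}^H\rho_g(\theta_k)\rVert^2.
\end{align*}

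Next I would handle the two stochastic terms separately. For the estimator norm $\lVert \widehat{\nabla}^H\rho_g(\theta_k)\rVert$, the key step is an almost-sure uniform bound obtained by pushing the norm inside the integral in \eqref{eq:hat_nabla_rho_H}, using \ref{as:g'_bound} together with the bounds $T\leq M_e$ from \eqref{eq:M_b}, $\lVert\nabla\log\pi_\theta\rVert\leq M_d$ from \ref{as:nabla_logpi}, $\psi^\theta\leq M_s$ from \eqref{eq:is_ratio}, and an integration domain of length $2M_r$. This yields $\lVert \widehat{\nabla}^H\rho_g(\theta_k)\rVert^2 \leq 4M_r^2 M_s^2 M_e^2 M_d^2 M_{g'}^2$ almost surely. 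For the error $\E\lVert e_k\rVert^2$, I would directly invoke Lemma \ref{lm:biasH}, which gives $\E\lVert e_k\rVert^2 \leq 32 M_r^2 M_s^2 M_e^2 M_d^2 (e^2 M_{g'}^2 + M_{g''}^2 M_s^2)/m$.

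Taking expectations, rearranging for $\tfrac{\alpha}{2}\E\lVert\nabla\rho_g(\theta_k)\rVert^2$, telescoping over $k=0,\dots,N-1$, and using $\sum_{k=0}^{N-1}(\E[\rho_g(\theta_{k+1})] - \E[\rho_g(\theta_k)]) \leq \rho_g^* - \rho_g(\theta_0)$ then gives
\begin{align*}
\tfrac{\alpha}{2}\sum_{k=0}^{N-1}\E\lVert\nabla\rho_g(\theta_k)\rVert^2 \leq \rho_g^* - \rho_g(\theta_0) + \tfrac{N\alpha}{2}\cdot \tfrac{32 M_r^2 M_s^2 M_e^2 M_d^2(e^2 M_{g'}^2 + M_{g''}^2 M_s^2)}{m} + 2 N \alpha^2 L_{\rho'} M_r^2 M_s^2 M_e^2 M_d^2 M_{g'}^2.
\end{align*}
Dividing by $N\alpha/2$ and observing that, for $R$ uniform on $\{1,\dots,N\}$, $\E\lVert\nabla\rho_g(\theta_R)\rVert^2 = \tfrac{1}{N}\sum_{k=0}^{N-1}\E\lVert\nabla\rho_g(\theta_k)\rVert^2$, gives exactly \eqref{eq:tm_drm_offP_lr}.

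The proof is structurally identical to Theorem \ref{tm:drm_onP_lr}; the only subtlety, and the part I would be most careful about, is tracking the extra $M_s$ factors that arise from the importance sampling ratio in both the uniform bound on $\lVert \widehat{\nabla}^H\rho_g(\theta_k)\rVert$ and the MSE bound from Lemma \ref{lm:biasH}. Everything else is a routine application of smoothness, Young's inequality, and telescoping, so no further obstacle is anticipated.
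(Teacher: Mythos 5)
Your proposal is correct and follows essentially the same route as the paper: a descent-lemma/smoothness inequality from Lemma \ref{lm:nabla_rho_lip}, Young's inequality to split off the estimation error, the MSE bound of Lemma \ref{lm:biasH}, a second-moment bound on $\widehat{\nabla}^H\rho_g$ (the paper's Lemma \ref{lm:varH} gives the same constant $4M_r^2M_{g'}^2M_s^2M_e^2M_d^2$ in expectation, where you use an almost-sure version), and telescoping; the resulting constants match \eqref{eq:tm_drm_offP_lr} exactly.
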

We specialize the bound in \eqref{eq:tm_drm_offP_lr} to a particular choice of step-size $\alpha$, and batch size $m$ in the corollary below.
\begin{corollary}
    \label{cr:drm_offP_lr}
     Set $\alpha=\frac{1}{\sqrt{N}}$, and $m=\sqrt{N}$. Then, under the conditions of Theorem \ref{tm:drm_offP_lr}, we have
\begin{align*}
    &\E\left[\left\lVert\nabla\rho_g(\theta_R)\right\rVert^2\right]\leq \frac{2 \left(\rho_g^* - \rho_g(\theta_{0})\right)}{\sqrt{N}} \\
    &\quad + \frac{4 M_r^2M_s^2M_e^2M_d^2 \left(M_{g'}^2L_{\rho'}+ 8(e^2M_{g'}^2 + M_{g''}^2M_s^2)\right)}{\sqrt{N}}.
\end{align*}
    \end{corollary}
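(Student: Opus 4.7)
The plan is to obtain the corollary by direct substitution of the prescribed choices $\alpha = 1/\sqrt{N}$ and $m = \sqrt{N}$ into the non-asymptotic bound of Theorem \ref{tm:drm_offP_lr}, namely
\begin{align*}
\E\left[\left\lVert \nabla \rho_g(\theta_R)\right\rVert^2\right]
  &\leq \frac{2 \left(\rho_g^* - \rho_g(\theta_{0})\right)}{N \alpha}\\
  &\quad + 4 M_r^2M_s^2M_e^2M_d^2 \left(\alpha M_{g'}^2L_{\rho'} + \frac{8(e^2M_{g'}^2 + M_{g''}^2M_s^2)}{m}\right).
\end{align*}
No new analytical machinery is required; the work is a bookkeeping exercise that exploits the fact that $\alpha$ and $m$ have been chosen precisely to balance the three additive error contributions.

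The first step is to simplify the optimization-error term: since $N\alpha = N/\sqrt{N} = \sqrt{N}$, the term $2(\rho_g^* - \rho_g(\theta_0))/(N\alpha)$ becomes exactly $2(\rho_g^* - \rho_g(\theta_0))/\sqrt{N}$, which is the first summand appearing in the corollary. The second step handles the smoothness/Lipschitz contribution $4 M_r^2M_s^2M_e^2M_d^2 \cdot \alpha M_{g'}^2L_{\rho'}$; substituting $\alpha = 1/\sqrt{N}$ produces a factor of $M_{g'}^2 L_{\rho'}/\sqrt{N}$. The third step processes the bias-plus-variance contribution from Lemma \ref{lm:biasH}, where setting $m = \sqrt{N}$ converts $8(e^2M_{g'}^2 + M_{g''}^2M_s^2)/m$ into $8(e^2M_{g'}^2 + M_{g''}^2M_s^2)/\sqrt{N}$. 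The last two pieces share the common prefactor $4M_r^2M_s^2M_e^2M_d^2/\sqrt{N}$, so I would collect them to obtain $4M_r^2M_s^2M_e^2M_d^2 \bigl(M_{g'}^2 L_{\rho'} + 8(e^2M_{g'}^2 + M_{g''}^2 M_s^2)\bigr)/\sqrt{N}$, which matches the second summand of the corollary.

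There is no substantive obstacle in this argument: every step is elementary algebra, and the dimensional constants $M_r, M_s, M_e, M_d, M_{g'}, M_{g''}, L_{\rho'}$ are treated as fixed problem-dependent quantities carried through unchanged. The only thing worth remarking on is that the choice $\alpha = m^{-1}$ with $\alpha = N^{-1/2}$ equates the three decay rates $1/(N\alpha)$, $\alpha$, and $1/m$, which is why the resulting convergence rate is exactly $O(1/\sqrt{N})$ and cannot be improved by a different tuning within this bound.
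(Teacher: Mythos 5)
Your proposal is correct and coincides with the paper's treatment: Corollary \ref{cr:drm_offP_lr} is obtained exactly by substituting $\alpha=\nicefrac{1}{\sqrt{N}}$ and $m=\sqrt{N}$ into the bound \eqref{eq:tm_drm_offP_lr} of Theorem \ref{tm:drm_offP_lr}, noting $N\alpha=\sqrt{N}$ and collecting the two terms sharing the prefactor $4M_r^2M_s^2M_e^2M_d^2/\sqrt{N}$. Nothing further is needed.
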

    From Corollary \ref{cr:drm_onP_lr} (or \ref{cr:drm_offP_lr}), we conclude that
an order $O\left(\nicefrac{1}{\epsilon^2}\right)$ number of iterations are enough to find an $\epsilon$-stationary point for DRM-OnP-LR (or DRM-OffP-LR).
%
\section{Convergence proofs}
\label{sec:dpg}

\begin{proof}(\textbf{Lemma \ref{lm:nablaFG}})
    Let $\Omega$ denote the set of all sample episodes. For any episode $\omega\in\Omega$, we denote by $T(\omega)$, its length, and $S_t(\omega)$ and $A_t(\omega)$, the state and action at time $t\in\{0,1,\cdots\}$ respectively.
For any $\omega$, let the cumulative discounted reward be
\begin{align*}
    R(\omega)=\sum\nolimits_{t=0}^{T(\omega)-1}\gamma^t r(S_t(\omega),A_t(\omega),S_{t+1}(\omega)).
\end{align*}
Also, let
\begin{align}
    \label{eq:pi_omega}
    \p_\theta(\omega) \!=\!\!\!\!\prod\limits_{t=0}^{T(\omega)-1}\!\!\pi_\theta(A_t(\omega)|S_t(\omega))p(S_{t+1}(\omega),S_t(\omega),A_t(\omega)).
    \end{align}
From \eqref{eq:pi_omega}, we obtain
\begin{align}
    \label{eq:is_grad_onp}
    \frac{\nabla \p_\theta(\omega)}{\p_\theta(\omega)} =\sum\nolimits_{t=0}^{T(\omega)-1}\!\nabla\log\pi_\theta(A_t(\omega) | S_t(\omega)).
    \end{align}
    Now,
    \begin{align}
        &\nabla F_{R^{\theta}}(x)
        \!=\!\nabla\E\!\left[\1\{R^{\theta}\leq x\}\right] \!=\!\nabla \!\sum_{\omega\in\Omega} \1\{R(\omega)\leq x\}\p_\theta(\omega)\nonumber\\
        &\stackrel{(a)}{=}\!\sum_{\omega\in\Omega}\nabla\left( \1\{R(\omega)\!\leq \!x\}\p_\theta(\omega)\right)\nonumber\\
        &\stackrel{(b)}{=}\!\sum_{\omega\in\Omega} \1\{R(\omega)\!\leq\! x\}\nabla \p_\theta(\omega)\label{eq:B_FG}\\
        &=\sum_{\omega\in\Omega} \1\{R(\omega)\leq x\} \frac{\nabla \p_\theta(\omega)}{\p_\theta(\omega)}\p_\theta(\omega)\nonumber\\
        &\stackrel{(c)}{=}\sum_{\omega\in\Omega} \1\{R(\omega)\leq x\}\!\sum\limits_{t=0}^{T(\omega)-1}\!\!\!\nabla\log\pi_\theta(A_t(\omega)|S_t(\omega))\p_\theta(\omega)\nonumber\\
        &=\E\left[\1\{R^{\theta}\leq x\}\sum\nolimits_{t=0}^{T-1}\nabla\log\pi_\theta(A_t|S_t)\right].\nonumber
    \end{align}
    In the above, \((a)\) follows by an application of the dominated convergence theorem to interchange the differentiation and the expectation operation. The aforementioned application is allowed since  (i) $\Omega$ is finite and the underlying measure is bounded, as we consider an MDP where the state and actions spaces are finite, and the policies are proper, (ii) $\nabla\log\pi_\theta(A_t|S_t)$ is bounded from \ref{as:nabla_logpi}. The step \((b)\) follows, since for a given episode $\omega$, the cumulative reward $R(\omega)$ does not depend on $\theta$, and \((c)\) follows from \eqref{eq:is_grad_onp}.
\end{proof}

The following lemma establishes an upper bound on the norm of the gradient and the Hessian of the CDF $F_{R^{\theta}}(\cdot)$.
\begin{lemma}
    \label{lm:nablaF_bound}
     $\forall x \in(-M_r,M_r)$,
    \begin{align*}
   \left\lVert \nabla F_{R^{\theta}}(x) \right\rVert \!\leq\! M_e M_d;\,\,
    \left\lVert \nabla^2 F_{R^{\theta}}(x) \right\rVert \!\leq\! M_e M_h +M_e^2M_d^2.
    \end{align*}
\end{lemma}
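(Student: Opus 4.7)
The plan is to bound each quantity by moving the differentiation inside the expectation and then applying the norm bounds from assumptions \ref{as:nabla_logpi} and \ref{as:nabla_logpi_2}, together with the a.s.\ episode-length bound $T \leq M_e$ from \eqref{eq:M_pi}.

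For the gradient bound, I would start directly from the expression in Lemma \ref{lm:nablaFG}. Applying the triangle inequality for expectations, together with $\lVert \1\{R^\theta \leq x\}\rVert \leq 1$ and $\lVert \nabla\log\pi_\theta(A_t\mid S_t)\rVert \leq M_d$ from \ref{as:nabla_logpi}, gives
\begin{align*}
\lVert \nabla F_{R^\theta}(x)\rVert
&\leq \E\!\left[\sum_{t=0}^{T-1}\lVert\nabla\log\pi_\theta(A_t\mid S_t)\rVert\right]
\leq M_d \,\E[T] \leq M_e M_d,
\end{align*}
since $T\leq M_e$ a.s.

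For the Hessian bound, the idea is to differentiate the identity
\[
\nabla F_{R^\theta}(x) = \sum_{\omega\in\Omega}\1\{R(\omega)\leq x\}\,\nabla \p_\theta(\omega)
\]
obtained in step $(b)$ of the proof of Lemma \ref{lm:nablaFG}, once more in $\theta$. Using $\nabla\p_\theta(\omega) = \p_\theta(\omega)\, u_\theta(\omega)$ with $u_\theta(\omega) := \sum_{t=0}^{T(\omega)-1}\nabla\log\pi_\theta(A_t(\omega)\mid S_t(\omega))$, the product rule yields
\[
\nabla^2 \p_\theta(\omega) = \p_\theta(\omega)\!\left[u_\theta(\omega)\,u_\theta(\omega)^\top + \sum_{t=0}^{T(\omega)-1}\nabla^2\log\pi_\theta(A_t(\omega)\mid S_t(\omega))\right],
\]
so that
\[
\nabla^2 F_{R^\theta}(x) = \E\!\left[\1\{R^\theta\!\leq x\}\!\left(u_\theta u_\theta^\top + \sum_{t=0}^{T-1}\nabla^2\log\pi_\theta(A_t\mid S_t)\right)\right].
\]
Justifying the interchange of $\nabla$ and the $\omega$-summation proceeds exactly as in step $(a)$ of the proof of Lemma \ref{lm:nablaFG}: $\Omega$ is effectively finite up to time $M_e$, and assumption \ref{as:nabla_logpi_2} provides the integrable bound needed for the dominated convergence theorem. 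Taking operator norms and applying the triangle inequality, together with $\lVert u_\theta u_\theta^\top\rVert \leq \lVert u_\theta\rVert^2 \leq (M_e M_d)^2$ and $\lVert \nabla^2\log\pi_\theta(A_t\mid S_t)\rVert \leq M_h$ from \ref{as:nabla_logpi_2}, yields
\[
\lVert \nabla^2 F_{R^\theta}(x)\rVert \leq M_e^2 M_d^2 + M_e M_h,
\]
which is exactly the claimed bound.

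The main technical point is the second interchange of differentiation and expectation; everything else is a clean application of the triangle inequality and the uniform bounds. Note that assumption \ref{as:nabla_logpi_2} (bounding $\nabla^2\log\pi_\theta$) is essential here, which explains why this lemma will be used only when proving smoothness-type results (as in Lemma \ref{lm:nabla_rho_lip}) rather than in the bounds that need only first derivatives.
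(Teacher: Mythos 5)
Your proof is correct and follows essentially the same route as the paper: both derive the second-order likelihood-ratio identity $\nabla^2\p_\theta(\omega)/\p_\theta(\omega)=u_\theta u_\theta^\top+\sum_t\nabla^2\log\pi_\theta$, justify the interchange of differentiation and the finite sum over episodes as in Lemma \ref{lm:nablaFG}, and then apply the uniform bounds $M_d$, $M_h$, and $T\leq M_e$ a.s.\ to obtain $M_eM_d$ and $M_eM_h+M_e^2M_d^2$. No gaps.
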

\begin{proof}
     Using similar arguments as in proof of Lemma \ref{lm:nablaFG}, we obtain
    \begin{align*}
        &\frac{\nabla^2 \p_\theta(\omega)}{\p_\theta(\omega)} =\sum\nolimits_{t=0}^{T(\omega)-1}\nabla^2\log\pi_\theta(A_t(\omega) | S_t(\omega)) + \\
        &\left[\!\sum\limits_{t=0}^{T(\omega)\!-\!1}\!\!\!\!\!\nabla\!\log\pi_\theta(A_t(\omega) | S_t(\omega)\!)\!\right]\!\!\!\! \left[\!\sum\limits_{t=0}^{T(\omega)\!-\!1}\!\!\!\!\!\nabla\!\log\pi_\theta(A_t(\omega) | S_t(\omega)\!)\!\right]^{\!\!\top}
    \end{align*}
    and which in turn leads to
    \begin{align}
        \label{eq:nablaFG_1}
        &\nabla^2 F_{R^{\theta}}(x)
        =\E\left[\1\{R^{\theta}\leq x\}\left(\sum\nolimits_{t=0}^{T-1}\!\nabla^2\log\pi_\theta(A_t | S_t)\right.\right.\nonumber\\
        &\left.\left.\!+\!
        \left[\sum\limits_{t=0}^{T-1}\!\nabla\log\pi_\theta(A_t | S_t)\right] \!\!\!\left[\sum\limits_{t=0}^{T-1}\!\nabla\log\pi_\theta(A_t | S_t)\right]^T\right)\!\!\right]\!
    \end{align}
    From \ref{as:nabla_logpi}, \ref{as:nabla_logpi_2}, and \eqref{eq:M_pi}, $\forall x\in(-M_r,M_r)$, we obtain
    \begin{align}
        \label{eq:nabla_G_bound}
        \left\lVert\1\{R^{\theta}\leq x\}\sum\limits_{t=0}^{T-1}\nabla\log \pi_{\theta}(A_t|S_t)\right\rVert \leq  M_e M_d \textrm{ a.s.},
    \end{align}
    and
    \begin{align}
        \label{eq:nabla_G_bound1}
        &\left\lVert\1\{R^{\theta}\leq x\}\left(\sum\nolimits_{t=0}^{T-1}\nabla^2\log\pi_\theta(A_t | S_t)\right.\right.\nonumber\\
        &+\left.\left.\left[\sum\limits_{t=0}^{T-1}\nabla\log\pi_\theta(A_t | S_t)\right]\left[\sum\limits_{t=0}^{T-1}\nabla\log\pi_\theta(A_t | S_t)\right]^{\top}\right)\right\rVert\nonumber\\
        &\leq  M_e M_h +M_e^2M_d^2\textrm{ a.s}.
    \end{align}
    From Lemma \ref{lm:nablaFG} and \eqref{eq:nablaFG_1},  $\forall x\in(-M_r,M_r)$, we obtain
    \begin{align*}
        \left\lVert \nabla F_{R^{\theta}}(x) \right\rVert
        \!\leq\! \E\!\left\lVert \1\!\{R^{\theta}\!\leq\! x\}\!\! \sum\limits_{t=0}^{T-1}\!\nabla\log \pi_{\theta}(A_t| S_t)\!\right\rVert
        \!\leq\! M_e M_d,
    \end{align*}
    \begin{align*}
        &\left\lVert \nabla^2 F_{R^{\theta}}(x) \right\rVert
        \leq \E\left\lVert\1\{R^{\theta}\leq x\}\left(\sum\limits_{t=0}^{T-1}\nabla^2\log\pi_\theta(A_t | S_t)\right.\right.\nonumber\\
        &\left.\left.+\left[\sum\limits_{t=0}^{T-1}\nabla\log\pi_\theta(A_t | S_t)\right] \left[\sum\limits_{t=0}^{T-1}\nabla\log\pi_\theta(A_t | S_t)\right]^T\right)\right\rVert\nonumber\\
        &\leq M_e M_h +M_e^2M_d^2, 
    \end{align*}
    where these inequalities follow from \eqref{eq:nabla_G_bound}, \eqref{eq:nabla_G_bound1}, and the assumption that the state and action spaces are finite.
\end{proof}

\begin{proof}(\textbf{Theorem \ref{thm:nabla_rho_g}})
    Notice that,
    \begin{align*}
        &\nabla \rho_g(\theta)\\
        &=\nabla\!\!\int_{-M_r}^{0}\!\!\!\left(g(1\!-\!F_{R^{\theta}}(x))\!-\!1\right)dx  +\! \nabla\!\!\int_{0}^{M_r}\!\!\!\! g(1\!-\!F_{R^{\theta}}(x))dx \\
        &\stackrel{(a)}{=}\!\!\!\int_{-M_r}^{0}\!\!\!\!\nabla \left(g(1\!-\!F_{R^{\theta}}(x))\!-\!1\right) dx + \!\!\int_{0}^{M_r}\!\!\nabla g(1\!-\!F_{R^{\theta}}(x)) dx \\
        &=-\int_{-M_r}^{M_r} g'(1-F_{R^{\theta}}(x)) \nabla F_{R^{\theta}}(x)dx.
    \end{align*}
    In the above, \((a)\) follows by an application of the dominated convergence theorem to interchange the differentiation and the integral operation. The aforementioned application is allowed since
    (i) $\rho_g(\theta)$ is finite for any $\theta \!\in\! \R^d$; (ii)  $\left\lvert g'(\cdot)\right\rvert \!\leq\! M_{g'}$ from \ref{as:g'_bound}, and $\nabla F_{R^{\theta}}(\cdot)$ is bounded from Lemma \ref{lm:nablaF_bound}. The bounds on $g'$ and $\nabla F_{R^{\theta}}$ imply\\$\int_{-M_r}^{M_r} \left\lVert g'(1-F_{R^{\theta}}(x)) \nabla F_{R^{\theta}}(x)\right\rVert dx \leq 2 M_rM_{g'} M_eM_d$.
\end{proof}
\begin{proof}(\textbf{Lemma \ref{lm:nabla_rho_lip}})
     Using the mean value theorem, we obtain
    $g'(t)-g'(t') = g''(\tilde{t}) (t - t'),\tilde{t} \in (t,t')$,
    and from \ref{as:g'_bound}, we have $\left\lvert g''(\tilde{t}) \right\rvert \leq M_{g''}, \forall \tilde{t} \in(0,1)$. Hence,
    \begin{align}
        \label{eq:g_lip}
        \left\lvert g'(t) -g'(t')\right\rvert \leq M_{g''} \left\lvert t - t'\right\rvert\;\forall t,t'\in(0,1).
       \end{align}
    From Lemma \ref{lm:nablaF_bound} and \cite[Lemma~1.2.2]{nesterov_book}, for any $ x \in (-M_r,M_r)$ and $\forall \theta_1, \theta_2 \in \R^d$, we obtain
        \begin{align}
            &\left\lvert F_{R^{\theta_1}}(x) - F_{R^{\theta_2}}(x) \right\rvert \leq M_eM_d \left\lVert \theta_1 - \theta_2 \right\rVert,  \label{eq:F_lip}\\
            &\left\lVert \nabla F_{R^{\theta_1}}(x) \!-\! \nabla F_{R^{\theta_2}}(x) \right\rVert \!\leq\! (M_eM_h+M_e^2M_d^2) \left\lVert \theta_1 \!-\! \theta_2 \right\rVert.\label{eq:F_smooth}
        \end{align}
    From Theorem \ref{thm:nabla_rho_g}, we obtain
    \begin{align*}
        &\left\lVert \nabla\rho_g(\theta_1) \!-\! \nabla \rho_g(\theta_2) \right\rVert
        \leq \!\int_{-M_r}^{M_r}\!\!\!\left\lVert  g'(1\!-\!F_{R^{\theta_1}}(x)) \nabla F_{R^{\theta_1}}(x)\right.\\
        &\quad\left.- g'(1-F_{R^{\theta_2}}(x)) \nabla F_{R^{\theta_2}}(x) \right\rVert dx\\
        &\leq \int_{-M_r}^{M_r}\left\lVert  g'(1-F_{R^{\theta_1}}(x)) \nabla F_{R^{\theta_1}}(x)\right.\\
        &-g'(1\!-\!F_{R^{\theta_1}}(x)) \nabla F_{R^{\theta_2}}(x) + g'(1\!-\!F_{R^{\theta_1}}(x)) \nabla F_{R^{\theta_2}}(x) \\
        &\quad \left.-g'(1-F_{R^{\theta_2}}(x)) \nabla F_{R^{\theta_2}}(x) \right\rVert dx   \\
        &\leq \int_{-M_r}^{M_r}  \left(\left\lvert g'(1-F_{R^{\theta_1}}(x)) \right\rvert  \left\lVert \nabla F_{R^{\theta_1}}(x) -  \nabla F_{R^{\theta_2}}(x)\right\rVert\right.\\
        &+\left.\left\lVert\nabla F_{R^{\theta_2}}(x)\right\rVert \left\lvert  g'(1-F_{R^{\theta_1}}(x)) - g'(1-F_{R^{\theta_2}}(x))\right\rvert \right)dx\\
        &\stackrel{(a)}{\leq} \int_{-M_r}^{M_r}\left(M_{g'} \left\lVert \nabla F_{R^{\theta_1}}(x) -  \nabla F_{R^{\theta_2}}(x)\right\rVert\right.\\
        &\quad \left.+M_eM_d M_{g''}\left\lvert F_{R^{\theta_1}}(x)  - F_{R^{\theta_2}}(x)\right\rvert \right)dx \\
        &\stackrel{(b)}{\leq} \int_{-M_r}^{M_r} \left(M_{g'} (M_eM_h+M_e^2M_d^2)\left\lVert  \theta_1  - \theta_2\right\rVert\right.\\
        &\quad\left.+ M_e^2M_d^2 M_{g''}\left\lVert  \theta_1  - \theta_2\right\rVert\right) dx \\
        &\leq 2M_r M_e \left( M_h M_{g'}+M_eM_d^2 (M_{g'}+ M_{g''})\right) \left\lVert  \theta_1 - \theta_2\right\rVert\\
        &=L_{\rho'}\left\lVert  \theta_1 - \theta_2\right\rVert,
    \end{align*}
    where \((a)\) follows from \ref{as:g'_bound}, \eqref{eq:g_lip}, and Lemma \ref{lm:nablaF_bound}, and \((b)\) follows from \eqref{eq:F_lip}-\eqref{eq:F_smooth}.
\end{proof}

In the lemma below, we establish an upper bound on the variance of the DRM gradient estimate $\widehat{\nabla}^G \rho_g(\theta)$ as defined in \eqref{eq:hat_nabla_rho_G}. Subsequently, we use this result to prove Lemma \ref{lm:biasG}, and Theorem \ref{tm:drm_onP_lr}.
\begin{lemma}
    \label{lm:varG}
    $\E\left\lVert \widehat{\nabla}^G \rho_g(\theta) \right\rVert^2 \leq 4 M_r^2 M_{g'}^2 M_e^2M_d^2.$
\end{lemma}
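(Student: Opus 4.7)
The plan is to derive an almost sure pointwise bound on $\bigl\lVert \widehat{\nabla}^G \rho_g(\theta) \bigr\rVert$, square it, and then take expectation; since the bound is deterministic the expectation is immediate.

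First I would apply the triangle inequality (for integrals) to push the norm inside the integral in \eqref{eq:hat_nabla_rho_G}:
\begin{align*}
\bigl\lVert \widehat{\nabla}^G \rho_g(\theta)\bigr\rVert
\leq \int_{-M_r}^{M_r} \bigl\lvert g'(1-G^m_{R^{\theta}}(x))\bigr\rvert \,\bigl\lVert \widehat{\nabla}G^m_{R^{\theta}}(x)\bigr\rVert\, dx.
\end{align*}
Then I would handle the two factors in the integrand separately. The factor $|g'(1-G^m_{R^{\theta}}(x))|$ is bounded by $M_{g'}$ directly from \ref{as:g'_bound}. For the other factor, I would expand the definition \eqref{eq:nabla_G}, apply the triangle inequality over the $m$ episodes and the time indices, and invoke \ref{as:nabla_logpi} together with the episode length bound \eqref{eq:M_pi} to conclude
\begin{align*}
\bigl\lVert \widehat{\nabla}G^m_{R^{\theta}}(x)\bigr\rVert
\leq \frac{1}{m}\sum_{i=1}^{m}\1\{R^\theta_i\leq x\}\sum_{t=0}^{T^i-1}\bigl\lVert\nabla\log\pi_\theta(A_t^i\mid S_t^i)\bigr\rVert
\leq M_e M_d \quad \textrm{a.s.}
\end{align*}

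Combining the two bounds inside the integral gives
\begin{align*}
\bigl\lVert \widehat{\nabla}^G \rho_g(\theta)\bigr\rVert \leq \int_{-M_r}^{M_r} M_{g'}\, M_e M_d \, dx = 2 M_r M_{g'} M_e M_d \quad \textrm{a.s.}
\end{align*}
Squaring both sides preserves the inequality, and since the right-hand side is deterministic, taking expectation yields the claimed bound $4 M_r^2 M_{g'}^2 M_e^2 M_d^2$.

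There is no real obstacle here: the argument is essentially a sequence of triangle inequalities together with the uniform bounds already collected from \ref{as:nabla_logpi}, \ref{as:g'_bound} and \eqref{eq:M_pi}. The only mild subtlety is recognising that all the relevant bounds hold almost surely (so that variance control reduces to a deterministic upper bound) and that the indicator $\1\{R^\theta_i\leq x\}\leq 1$ contributes nothing beyond the length $2M_r$ of the integration interval.
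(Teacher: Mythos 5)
Your proof is correct and reaches the paper's bound with the same constants. The key ingredients are identical to the paper's: the uniform almost-sure bound $\lVert \widehat{\nabla}G^m_{R^{\theta}}(x)\rVert \leq M_e M_d$ (which the paper records as display \eqref{eq:nabla_G_bound} inside Lemma \ref{lm:nablaF_bound}) and the bound $\lvert g'\rvert \leq M_{g'}$ from \ref{as:g'_bound}, integrated over an interval of length $2M_r$. The only difference is the order of operations. The paper first squares, pulls $M_{g'}^2$ out, and then controls $\E\bigl\lVert \int_{-M_r}^{M_r}\widehat{\nabla}G^m_{R^{\theta}}(x)\,dx\bigr\rVert^2$ via the Cauchy--Schwarz inequality (costing a factor $2M_r$) followed by Fubini's theorem and the pointwise bound. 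You instead establish the deterministic almost-sure bound $\lVert \widehat{\nabla}^G\rho_g(\theta)\rVert \leq 2M_r M_{g'} M_e M_d$ directly via the triangle inequality for vector-valued integrals, and only then square and take expectations; this bypasses Cauchy--Schwarz and Fubini entirely and is marginally more elementary, while yielding the same final constant $4M_r^2 M_{g'}^2 M_e^2 M_d^2$. Your route also gives the slightly stronger conclusion that the estimator's norm is bounded almost surely, not merely in mean square, though the lemma does not need that.
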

\begin{proof}
    From \eqref{eq:nabla_G}, using \eqref{eq:nabla_G_bound} from Lemma \ref{lm:nablaF_bound}, we obtain
    \begin{align}
        \label{eq:nabla_Gm_bound}
        \left\lVert\widehat{\nabla}G^m_{R^{\theta}}(x)\right\rVert^2 \leq M_e^2M_d^2 \textrm{ a.s.},\forall x\in(-M_r,M_r).
    \end{align}
    From \eqref{eq:hat_nabla_rho_G}, and  \ref{as:g'_bound}, we obtain
    \begin{align*}
        &\E\left\lVert\widehat{\nabla}^G \rho_g(\theta) \right\rVert^2
        \leq M_{g'}^2 \E\left\lVert \int_{-M_r}^{M_r} \widehat{\nabla}G^m_{R^{\theta}}(x)dx \right\rVert^2\\
        &\stackrel{(a)}{\leq} 2 M_r M_{g'}^2 \E\left[\int_{-M_r}^{M_r} \left\lVert\widehat{\nabla}G^m_{R^{\theta}}(x)\right\rVert^2dx \right]\\
        &\stackrel{(b)}{\leq} 2 M_r M_{g'}^2\int_{-M_r}^{M_r}\E\left\lVert\widehat{\nabla}G^m_{R^{\theta}}(x)\right\rVert^2 dx\\
        &\stackrel{(c)}{\leq} 2 M_r M_{g'}^2\int_{-M_r}^{M_r}  M_e^2M_d^2\, dx = 4 M_r^2 M_{g'}^2M_e^2M_d^2,
    \end{align*}
    where \((a)\) follows from the Cauchy-Schwarz inequality, \((b)\) follows from the Fubini's theorem, and \((c)\) follows from \eqref{eq:nabla_Gm_bound}.
\end{proof}
\begin{proof}(\textbf{Lemma \ref{lm:biasG}})
 From the fact that\\$\forall x\in(-M_r,M_r)$, $\left\lvert\1\{R^{\theta}\leq x\}\right\rvert \leq  1$ a.s.,
we observe that $\left\{m'\left(G^{m'}_{R^{\theta}}(x) - F_{R^{\theta}}(x)\right)\right\}_{m'=1}^{m}$ is a set of partial sums of bounded mean zero r.v.s, and hence they are martingales.
Using Azuma-Hoeffding's inequality, we obtain $\forall x\in(-M_r,M_r)$,
\begin{align}
    &\p\left(\left\lvert G^m_{R^{\theta}}(x) - F_{R^{\theta}}(x) \right\rvert > \epsilon\right) \leq 2\exp\left(-\frac{m\epsilon^2}{2}\right), \textrm{ and} \nonumber\\
    &\E\left\lvert G^m_{R^{\theta}}(x) \!-\! F_{R^{\theta}}(x)\right\rvert^2
    \!=\!\int\limits_{0}^{\infty}\! \p\left(\left\lvert G^m_{R^{\theta}}(x) \!-\! F_{R^{\theta}}(x)\right\rvert \!>\! \sqrt{\epsilon}\right)d\epsilon \nonumber\\
    &\leq \int_{0}^{\infty} 2\exp\left(-\frac{m\epsilon}{2}\right) d\epsilon = \frac{4 }{m}.\label{eq:G_err}
\end{align}
Using \eqref{eq:nabla_G_bound} from Lemma \ref{lm:nablaF_bound}, we observe that\\$\left\{m'\left(\widehat{\nabla}G^{m'}_{R^{\theta}}(x) \!-\! \nabla F_{R^{\theta}}(x) \right)\right\}_{m'=1}^{m}$ is a set of partial sums of bounded mean zero r.v.s, and hence they are martingales. Using vector version of Azuma-Hoeffding inequality from Theorem~1.8-1.9 in \cite{hayes2005large}, for any $x\in(-M_r,M_r)$, we have
\begin{align}
&\p\!\left(\!\left\lVert \widehat{\nabla}G^m_{R^{\theta}}(x) \!-\! \nabla F_{R^{\theta}}(x) \right\rVert \!>\! \epsilon\!\right) \!\leq\! 2e^2\!\exp\left(\frac{-m\epsilon^2}{2M_e^2M_d^2}\!\right),\nonumber\\
&\E\left\lVert \nabla F_{R^{\theta}}(x) - \widehat{\nabla}G^m_{R^{\theta}}(x) \right\rVert^2 \nonumber\\
&= \int_{0}^{\infty} \p\left( \left\lVert \nabla F_{R^{\theta}}(x) - \widehat{\nabla}G^m_{R^{\theta}}(x) \right\rVert > \sqrt{\epsilon}\right)d\epsilon \nonumber\\
&\leq \int_{0}^{\infty} 2e^2\exp\left(-\frac{m\epsilon}{2M_e^2M_d^2}\right) d\epsilon = \frac{4e^2M_e^2M_d^2}{m}.\label{eq:nabla_G_err}
\end{align}
From \eqref{eq:nabla_Gm_bound}, for any $x\in(-M_r,M_r)$, we have
\begin{align}
    &\E\left\lVert\left(g'(1-F_{R^{\theta}}(x))-  g'(1-G^m_{R^{\theta}}(x))\right) \widehat{\nabla}G^m_{R^{\theta}}(x) \right\rVert^2\nonumber\\
    &\leq M_e^2M_d^2\,\E\left\lvert g'(1-F_{R^{\theta}}(x)) - g'(1-G^m_{R^{\theta}}(x))\right\rvert^2\nonumber\\
    &\leq M_{g''}^2M_e^2M_d^2 \,\E\left\lvert G^m_{R^{\theta}}(x)-F_{R^{\theta}}(x)\right\rvert^2
    \leq \frac{4M_{g''}^2M_e^2M_d^2}{m},\label{eq:G_A}
\end{align}
where the last two inequalities follow from \eqref{eq:g_lip} and \eqref{eq:G_err}. From  \ref{as:g'_bound}, for any $x\in(-M_r,M_r)$, we have
\begin{align}
\label{eq:nablaG_A}
&\E\left\lVert g'(1-F_{R^{\theta}}(x)) \left(\nabla F_{R^{\theta}}(x)-\widehat{\nabla}G^m_{R^{\theta}}(x)\right)\right\rVert^2\nonumber\\
&\leq M_{g'}^2\,\E\left\lVert \nabla F_{R^{\theta}}(x) - \widehat{\nabla}G^m_{R^{\theta}}(x) \right\rVert^2 \leq\frac{4e^2M_{g'}^2M_e^2M_d^2}{m},
\end{align}
where the last inequality follows  from \eqref{eq:nabla_G_err}.

From \eqref{eq:nabla_rho_g_1}, \eqref{eq:hat_nabla_rho_G}, and Cauchy-Schwarz inequality, we have
\begin{align}
&\E\left\lVert\widehat{\nabla}^G \rho_g(\theta) -\nabla\rho_g(\theta)\right\rVert^2 \nonumber\\
&\leq 2M_r \E\left[\int_{-M_r}^{M_r} \left\lVert g'(1-F_{R^{\theta}}(x)) \nabla F_{R^{\theta}}(x)\right.\right. \nonumber\\
&\quad \left.\left. -  g'(1-G^m_{R^{\theta}}(x)) \widehat{\nabla}G^m_{R^{\theta}}(x)\right\rVert^2 dx\right]\nonumber\\
&\stackrel{(a)}{\leq} 2M_r \int_{-M_r}^{M_r}\E\left[\left\lVert g'(1-F_{R^{\theta}}(x)) \nabla F_{R^{\theta}}(x)\right.\right. \nonumber\\
&\quad \left.\left. -  g'(1-G^m_{R^{\theta}}(x)) \widehat{\nabla}G^m_{R^{\theta}}(x)\right\rVert^2 \right]dx\nonumber\\
&\leq 2M_r \int_{-M_r}^{M_r}\!\!\E\left[\left\lVert g'(1\!-\!F_{R^{\theta}}(x)) \left(\nabla F_{R^{\theta}}(x) \!-\! \widehat{\nabla}G^m_{R^{\theta}}(x)\right)\right.\right.\nonumber\\
&\left.\left.+ \left(g'(1-F_{R^{\theta}}(x))-  g'(1-G^m_{R^{\theta}}(x))\right) \widehat{\nabla}G^m_{R^{\theta}}(x) \right\rVert^2\right]dx\nonumber\\
&\stackrel{(b)}{\leq}\! 4M_r \!\!\!\int\limits_{-M_r}^{M_r} \!\!\!\left(\E\left\lVert g'(1\!-\!F_{R^{\theta}}(x))\!
\left(\nabla F_{R^{\theta}}(x)\!-\!\widehat{\nabla}G^m_{R^{\theta}}(x)\right)\right\rVert^2\right.\nonumber\\
& \left.+ \E\left\lVert\left(g'(1\!-\!F_{R^{\theta}}(x))\!-\!  g'(1\!-\!G^m_{R^{\theta}}(x))\right)\!
\widehat{\nabla}G^m_{R^{\theta}}(x) \right\rVert^2\right)dx \nonumber\\
&\stackrel{(c)}{\leq} \frac{32 M_r^2M_e^2M_d^2(e^2M_{g'}^2 + M_{g''}^2)}{m},\label{eq:biasG}
\end{align}
where \((a)\) follows from the Fubini's theorem, \((b)\) follows from the fact that $\lVert x+y \rVert^2 \leq 2\lVert x \rVert^2+ 2\Vert y \rVert^2$, and \((c)\) follows from \eqref{eq:G_A} and \eqref{eq:nablaG_A}.
\end{proof}
\begin{proof}(\textbf{Theorem \ref{tm:drm_onP_lr}})
    Using the fundamental theorem of calculus, we obtain
    \begin{align}
        & \rho_g(\theta_k) - \rho_g(\theta_{k+1})
        =\langle \nabla \rho_g(\theta_k), \theta_k - \theta_{k+1} \rangle \nonumber\\
        &+\! \int_0^1\!\!\! \left\langle  \nabla \rho_g(\theta_{k+1}+\tau(\theta_k\!-\!\theta_{k+1}))\!-\!\nabla \rho_g(\theta_k), \theta_k \!-\! \theta_{k+1} \right\rangle d\tau\nonumber\\
        &\leq\langle \nabla \rho_g(\theta_k), \theta_k - \theta_{k+1} \rangle \nonumber\\
        &+\!\!\!\int_0^1 \!\!\!\!\left\lVert\nabla\rho_g(\theta_{k+1}\!+\!\tau(\theta_k \!-\! \theta_{k+1}))\!-\!\nabla\rho_g(\theta_k)\right\rVert \!\left\lVert \theta_k \!-\! \theta_{k+1} \right\rVert\! d\tau\nonumber\\
        &\stackrel{(a)}{\leq} \left \langle \nabla \rho_g(\theta_k), \theta_k \!-\! \theta_{k+1} \right \rangle
        + L_{\rho'}\left\lVert \theta_k \!-\! \theta_{k+1} \right\rVert^2  \int_0^1 (1\!-\!\tau) d\tau \nonumber\\
        &= \left \langle \nabla \rho_g(\theta_k), \theta_k - \theta_{k+1} \right \rangle + \frac{L_{\rho'}}{2}\left\lVert \theta_k - \theta_{k+1} \right\rVert^2 \nonumber\\
        &= \alpha\left\langle\nabla \rho_g(\theta_k), -\widehat{\nabla}^G\!\rho_g(\theta_k) \right \rangle
        + \frac{L_{\rho'}\alpha^2}{2} \left\lVert\widehat{\nabla}^G\rho_g(\theta_k)\right\rVert^2\nonumber\\
       &= \alpha\left \langle \nabla \rho_g(\theta_k), \nabla \rho_g(\theta_k)-\widehat{\nabla}^G \rho_g(\theta_k) \right \rangle \nonumber\\
       &\quad-\alpha\left \lVert \nabla \rho_g(\theta_k)\right \rVert^2
       + \frac{L_{\rho'}\alpha^2}{2}\left\lVert \widehat{\nabla}^G\!\rho_g(\theta_k) \right\rVert^2 \nonumber\\
       &\stackrel{(b)}{\leq} \frac{\alpha}{2}\left \lVert \nabla \rho_g(\theta_k) \right \rVert^2 + \frac{\alpha}{2}\left \lVert \nabla \rho_g(\theta_k) -\widehat{\nabla}^G\!\rho_g(\theta_k) \right \rVert^2 \nonumber\\
       &\quad-\alpha\left \lVert \nabla \rho_g(\theta_k)\right \rVert^2
       + \frac{L_{\rho'}\alpha^2}{2}\left\lVert \widehat{\nabla}^G \rho_g(\theta_k) \right\rVert^2\nonumber\\
       &= \frac{\alpha}{2}\left \lVert \nabla \rho_g(\theta_k) -\widehat{\nabla}^G\!\rho_g(\theta_k) \right \rVert^2 \nonumber\\
       &\quad-\frac{\alpha}{2}\left \lVert \nabla \rho_g(\theta_k)\right \rVert^2
       + \frac{L_{\rho'}\alpha^2}{2}\left\lVert \widehat{\nabla}^G \rho_g(\theta_k) \right\rVert^2,\label{eq:1_lr_onp}
    \end{align}
where \((a)\) follows from Lemma \ref{lm:nabla_rho_lip}, and \((b)\) follows from the fact that $2\langle x, y \rangle \leq \lVert x \rVert^2+ \Vert y \rVert^2$.

Taking expectations on both sides of \eqref{eq:1_lr_onp}, we obtain
\begin{align}
&\alpha\E\left \lVert \nabla \rho_g(\theta_k)\right \rVert^2
\leq  2\E\left[\rho_g(\theta_{k+1}) - \rho_g(\theta_{k})\right] \\
&+ L_{\rho'}\alpha^2 \E\left\lVert \widehat{\nabla}^G \rho_g(\theta_k) \right\rVert^2
+  \alpha \E\left \lVert \nabla \rho_g(\theta_k) \!-\!\widehat{\nabla}^G \rho_g(\theta_k) \right \rVert^2 \nonumber\\
&\leq  2\E\left[\rho_g(\theta_{k+1}) - \rho_g(\theta_{k})\right] \nonumber\\
&+  \alpha 4 M_r^2M_e^2M_d^2 \!\left(\alpha M_{g'}^2L_{\rho'} + \frac{8}{m}(e^2M_{g'}^2 \!+\! M_{g''}^2)\!\right),\label{eq:2_lr_onp}
\end{align}
where the last inequality follows from Lemmas \ref{lm:biasG} and \ref{lm:varG}.
Summing up \eqref{eq:2_lr_onp} from $k=0,\cdots,N-1$, we obtain
\begin{align*}
&\alpha\sum\nolimits_{k=0}^{N-1}\E\left \lVert \nabla \rho_g(\theta_k)\right \rVert^2\leq  2 \E\left[\rho_g(\theta_{N}) \!- \!\rho_g(\theta_{0})\right] \\
&\quad+ N \alpha 4 M_r^2M_e^2M_d^2 \left(\alpha M_{g'}^2L_{\rho'} + \frac{8(e^2M_{g'}^2 + M_{g''}^2)}{m}\right).
\end{align*}
Since $\theta_R$ is chosen uniformly at random from the policy iterates $\{\theta_0,\cdots,\theta_{N-1}\}$, we obtain
\begin{align*}
&\E\left\lVert \nabla \rho_g(\theta_R)\right\rVert^2
\!=\! \frac{1}{N}\sum\limits_{k=0}^{N-1}\E\left\lVert \nabla \rho_g(\theta_k)\right \rVert^2
\!\leq\! \frac{2 \left(\rho_g^* - \rho_g(\theta_{0})\right)}{N \alpha} \\
&\quad + 4 M_r^2M_e^2M_d^2 \left(\alpha M_{g'}^2L_{\rho'} + \frac{8(e^2M_{g,}^2 + M_{g''}^2)}{m}\right).
\end{align*}
\end{proof}
%
\section{Simulation Results}
\label{sec:sim}
We conducted experiments on a control problem known as Frozen Lake, sourced from the OpenAI Gym toolkit \cite{openai}. We customized the environment, depicted in Figure \ref{fg:sim}. The state space comprises a $6\times9$ grid, while the action space consists of $\mathscr{A}=\{left, down, right, up \}$. Each action corresponds to movement in the specified direction. When selecting an action, there is a $0.9$ probability that the agent moves in the intended direction, and a $0.05$ probability for each of the adjacent directions. Episodes terminate after $100$ steps, if the agent falls into hole H, or upon reaching the goal G. The rewards are assigned as follows: $+10$ for reaching the goal G, $-10$ for falling into hole H, and $-0.025$ for stepping on a frozen state F.

We performed experiments utilizing our DRM-OnP-LR algorithm, employing various distortion functions, including CVaR and identity function (see Table \ref{tb:g} for the mathematical expressions defining distortion functions). Our algorithm DRM-OnP-LR was executed over $N=10000$ iterations, with a batch size of $m=\sqrt{N}$ and a stepsize of $\alpha=\sqrt{N}$.

The reward plots depicted in Figure \ref{fg:sim} indicate that the DRM with a logarithmic distortion function performs better than other distortion functions, notably outperforming both CVaR and the identity function. Due to the high stochasticity of the grid, there is a notable risk of falling into the hole H when the agent takes the shortest path toward the goal G. Our simulation experiments revealed that when utilizing the logarithmic distortion function, the agent consistently chooses a path that avoids the holes H, resulting in enhanced average rewards.
\begin{figure*}
    \caption{Modified Frozen Lake}
    \label{fg:sim}
      \begin{subfigure}{0.32\linewidth}
      \caption{Simulation Grid}
          \includegraphics[width=\linewidth]{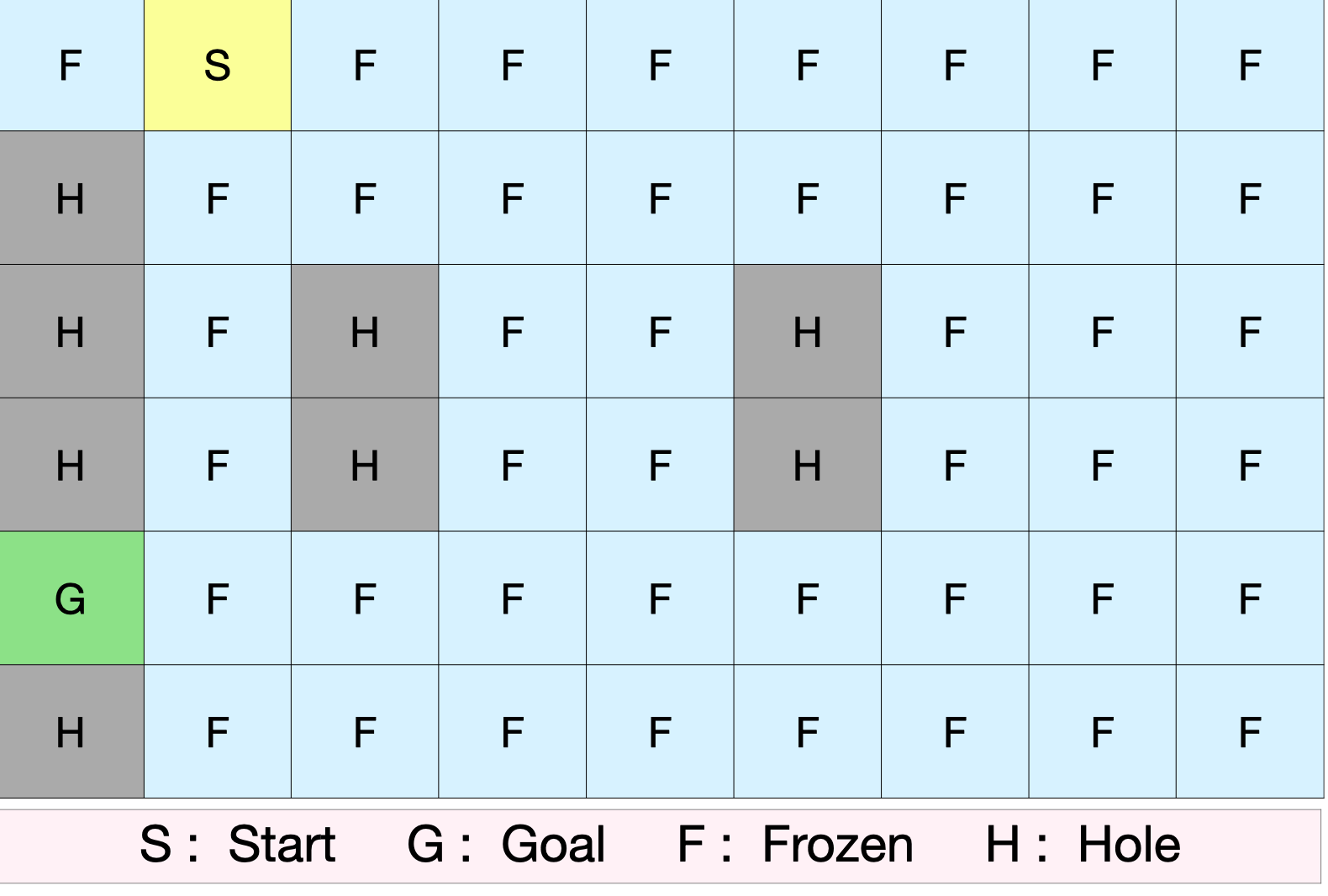}
  \end{subfigure}
  \hfil
    \begin{subfigure}{0.32\linewidth}
        \caption{Training Plots}
            \includegraphics[width=\linewidth]{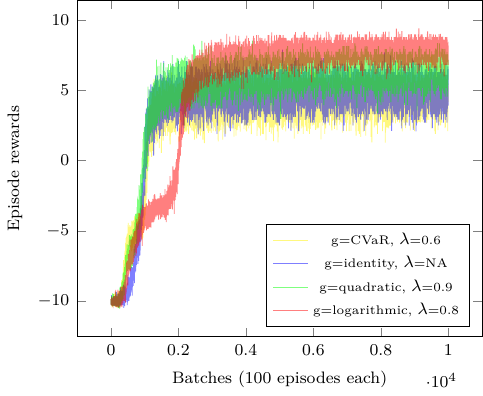}
    \end{subfigure}
    \hfil
    \begin{subfigure}{0.32\linewidth}
        \caption{Testing Plots}
            \includegraphics[width=\linewidth]{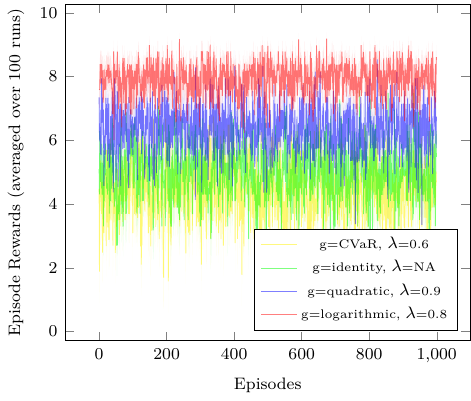}
    \end{subfigure}
\end{figure*}
%
%
\section{Conclusions and future work}
\label{sec:conclusions}
We proposed DRM-based policy gradient algorithms for risk-sensitive RL control. We employed LR-based gradient estimation schemes in on-policy as well as off-policy RL settings, and provided non-asymptotic bounds that establish convergence to an approximate stationary point of the DRM.

As a future work, it would be interesting to study DRM optimization in a risk-sensitive RL setting with feature-based representations, and function approximation. In this setting, one could consider an actor-critic algorithm for DRM optimization, and study its non-asymptotic performance.
\bibliographystyle{plain}
\bibliography{ref}

\begin{thebibliography}{10}

\bibitem{artzner99}
P.~Artzner, F.~Delbaen, J.~Eber, and D.~Heath.
\newblock Coherent measures of risk.
\newblock {\em Mathematical Finance}, 9(3):203--228, 1999.

\bibitem{balbas2009properties}
A.~Balb{\'a}s, J.~Garrido, and S.~Mayoral.
\newblock Properties of distortion risk measures.
\newblock {\em Methodology and Computing in Applied Probability},
  11(3):385--399, 2009.

\bibitem{ndp_book}
D.~P. Bertsekas and J.~N. Tsitsiklis.
\newblock {\em Neuro-Dynamic Programming}.
\newblock Athena Scientific, 1st edition, 1996.

\bibitem{Borkar02QR}
V.~S. Borkar.
\newblock Q-learning for risk-sensitive control.
\newblock {\em Mathematics of Operations Research}, 27:294--311, 2002.

\bibitem{openai}
Greg Brockman, Vicki Cheung, Ludwig Pettersson, Jonas Schneider, John Schulman,
  Jie Tang, and Wojciech Zaremba.
\newblock Openai gym, 2016.

\bibitem{chow2017risk}
Y.~Chow, M.~Ghavamzadeh, L.~Janson, and M.~Pavone.
\newblock Risk-constrained reinforcement learning with percentile risk
  criteria.
\newblock {\em J. Mach. Learn. Res.}, 18(1):6070--6120, 2017.

\bibitem{denneberg1990}
D.~Denneberg.
\newblock Distorted probabilities and insurance premiums.
\newblock {\em Methods of Operations Research}, 63(3):3--5, 1990.

\bibitem{dowd2006}
K.~Dowd and D.~Blake.
\newblock After {VaR}: The theory, estimation, and insurance applications of
  quantile-based risk measures.
\newblock {\em The Journal of Risk and Insurance}, 73(2):193--229, 2006.

\bibitem{glynn21}
P.~Glynn, Y.~Peng, M.~Fu, and J.~Hu.
\newblock Computing sensitivities for distortion risk measures.
\newblock {\em INFORMS J. on Computing}, pages 1--13, 2021.

\bibitem{gzyl06}
H.~Gzyl and S.~Mayoral.
\newblock On a relationship between distorted and spectral risk measures.
\newblock {\em Revista de Economía Financiera}, 15:8--21, 2008.

\bibitem{hayes2005large}
T.~Hayes.
\newblock A large-deviation inequality for vector-valued martingales.
\newblock {\em Combinatorics, Probability and Computing}, 2005.

\bibitem{jones03}
B.~Jones and R.~Zitikis.
\newblock Empirical estimation of risk measures and related quantities.
\newblock {\em North American Actuarial Journal}, 7:44--54, 2003.

\bibitem{kim_2010}
Joseph Kim.
\newblock Bias correction for estimated distortion risk measure using the
  bootstrap.
\newblock {\em Insur.: Math. Econ.}, 47:198--205, 2010.

\bibitem{nesterov_book}
Yurii~E. Nesterov.
\newblock {\em Introductory Lectures on Convex Optimization - {A} Basic
  Course}, volume~87 of {\em Applied Optimization}.
\newblock Springer, 2004.

\bibitem{papini2018}
M.~Papini, D.~Binaghi, G.~Canonaco, M.~Pirotta, and M.~Restelli.
\newblock Stochastic variance-reduced policy gradient.
\newblock In {\em ICML}, pages 4026--4035, 2018.

\bibitem{prashanth2014cvar}
L.~A. Prashanth.
\newblock {Policy gradients for CVaR-constrained MDPs}.
\newblock In {\em Algorithmic Learning Theory (ALT)}, pages 155--169, 2014.

\bibitem{prashla13}
L.~A. Prashanth and M.~Ghavamzadeh.
\newblock Actor-critic algorithms for risk-sensitive mdps.
\newblock In {\em Adv. Neural Inf. Process. Syst.}, volume~26, pages 252--260,
  2013.

\bibitem{prashla16}
L.~A. Prashanth, C.~Jie, M.~Fu, S.~Marcus, and C.~Szepesvari.
\newblock Cumulative prospect theory meets reinforcement learning: Prediction
  and control.
\newblock In {\em ICML}, volume~48, pages 1406--1415, 2016.

\bibitem{prashla2021}
L.A. Prashanth and M.~Fu.
\newblock Risk-sensitive reinforcement learning via policy gradient search.
\newblock {\em Foundations and Trends in Machine Learning}, 15(5):537--693,
  2022.

\bibitem{rockafellar2000}
R.~T. Rockafellar and S.~Uryasev.
\newblock {Optimization of conditional value-at-risk}.
\newblock {\em Journal of risk}, 2:21--42, 2000.

\bibitem{shen2019hessian}
Z.~Shen, A.~Ribeiro, H.~Hassani, H.~Qian, and C.~Mi.
\newblock Hessian aided policy gradient.
\newblock In {\em ICML}, pages 5729--5738, 2019.

\bibitem{sutton_book}
R.~S. Sutton and A.~G. Barto.
\newblock {\em Reinforcement Learning: An Introduction}.
\newblock The MIT Press, 2 edition, 2018.

\bibitem{sutton08}
Richard~S Sutton, Hamid Maei, and Csaba Szepesv\'{a}ri.
\newblock A convergent {O(n)} temporal-difference algorithm for off-policy
  learning with linear function approximation.
\newblock In {\em Adv. Neural Inf. Process. Syst.}, volume~21, pages
  1609--1616, 2009.

\bibitem{tamar2015}
A.~Tamar, Y.~Chow, M.~Ghavamzadeh, and S.~Mannor.
\newblock Policy gradient for coherent risk measures.
\newblock In {\em Adv. Neural Inf. Process. Syst.}, pages 1468--1476, 2015.

\bibitem{tversky1992advances}
A.~Tversky and D.~Kahneman.
\newblock Advances in prospect theory: Cumulative representation of
  uncertainty.
\newblock {\em J. Risk Uncertain.}, 5:297--323, 1992.

\bibitem{nv23}
N.~Vijayan and L.~A. Prashanth.
\newblock A policy gradient approach for optimization of smooth risk measures.
\newblock In {\em UAI}, volume 216, pages 2168--2178, 2023.

\bibitem{wang96}
S.~Wang.
\newblock Premium calculation by transforming the layer premium density.
\newblock {\em ASTIN Bulletin}, 26(1):71--92, 1996.

\bibitem{wang02}
SS~Wang.
\newblock A risk measure that goes beyond coherence.
\newblock In {\em 12th AFIR International Colloquium, Mexico.}, 2002.

\bibitem{wirch03}
J.~Wirch and M.~Hardy.
\newblock Distortion risk measures: Coherence and stochastic dominance.
\newblock {\em Insur. Math. Econ.}, 32:168--168, 2003.

\bibitem{zhangK2020}
K.~Zhang, A.~Koppel, H.~Zhu, and T.~Basar.
\newblock Global convergence of policy gradient methods to (almost) locally
  optimal policies.
\newblock {\em {SIAM} J. Control. Optim.}, 58(6):3586--3612, 2020.

\end{thebibliography}
\appendix
\section{Simplifying the estimate of the DRM gradient using order statistics}
\label{sec:conv_aux}
\begin{proof}(\textbf{Lemma \ref{lm:hat_nabla_rho_G}})
Our proof follows the technique from \cite{kim_2010}. Let $R^\theta_{(i)}$ is the $i^{th}$ smallest order statistic from the samples $\{R^\theta_i\}_{i=1}^m$. We rewrite \eqref{eq:G} as given below.
\begin{align}
    \label{eq:G2}
    G^m_{R^{\theta}}(x) =
    \begin{cases}
        0,&\textrm{if } x < R^\theta_{(1)}\\
        \frac{i}{m},&\textrm{if } R^\theta_{(i)} \leq x < R^\theta_{(i+1)},\\& i\in\{1,\!\cdots\!,m-1\}\\
        1,&\textrm{if } x \geq R^\theta_{(m)}.\\
    \end{cases}
\end{align}
Let $\nabla l^{\theta}_{(i)}= \sum_{t=0}^{T^{(i)}-1}\nabla\!\log \pi_{\theta}(A_t^{(i)} | S_t^{(i)})$, where $T^{(i)}$ is the length, and  $S_t^{(i)}$ and $A_t^{(i)}$ are the state and action at time $t$ of the episode corresponding to  $R^\theta_{(i)}$. We rewrite \eqref{eq:nabla_G} as given below.
     \begin{align}
     \label{eq:nabla_G2}
     \widehat{\nabla} G^m_{R^{\theta}}(x) =
     \begin{cases}
         0,&\textrm{if } x < R^\theta_{(1)}\\
         \frac{1}{m}\sum_{j=1}^i \nabla l^{\theta}_{(j)},&\textrm{if } R^\theta_{(i)} \leq x < R^\theta_{(i+1)},\\
         & i\in\{1,\!\cdots\!,m-1\}\\
         \frac{1}{m}\sum_{j=1}^m\nabla l^{\theta}_{(j)},&\textrm{if } x \geq R^\theta_{(m)}.
     \end{cases}
 \end{align}
Now,
\begin{align*}
&\widehat{\nabla}^G\!\rho_g(\theta)=-\int_{-M_r}^{M_r} g'(1-G^m_{R^{\theta}}(x)) \widehat{\nabla}G^m_{R^{\theta}}(x)dx\\
&=-\int_{-M_r}^{R^\theta_{(1)}} g'(1-G^m_{R^{\theta}}(x))\widehat{\nabla}G^m_{R^{\theta}}(x)dx\\
&\quad- \sum_{i=1}^{m-1} \int_{R^\theta_{(i)}}^{R^\theta_{(i+1)}} g'(1-G^m_{R^{\theta}}(x))\widehat{\nabla}G^m_{R^{\theta}}(x) dx\\
&\quad- \int_{R^\theta_{(m)}}^{M_r}g'(1-G^m_{R^{\theta}}(x))\widehat{\nabla}G^m_{R^{\theta}}(x)dx\\
&= -\frac{1}{m}\sum\nolimits_{i=1}^{m-1} \int_{R^\theta_{(i)}}^{R^\theta_{(i+1)}} g'\left(1-\frac{i}{m}\right) \sum\nolimits_{j=1}^i \nabla l^{\theta}_{(j)}dx\\
&\quad-\frac{1}{m}\int_{R^\theta_{(m)}}^{M_r}g'_{+}(0)\sum\nolimits_{j=1}^m \nabla l^{\theta}_{(j)} dx\\
&= \frac{1}{m}\sum\nolimits_{i=1}^{m-1} \left( R^\theta_{(i)}-R^\theta_{(i+1)}\right) g'\left(1-\frac{i}{m}\right) \sum\nolimits_{j=1}^i \nabla l^{\theta}_{(j)}\\
&\quad+ \frac{1}{m}\left(R^\theta_{(m)} -M_r\right) g'_{+}(0)\sum\nolimits_{j=1}^m\nabla l^{\theta}_{(j)},
\end{align*}
where $g'_{+}(0)$ is the right derivative of the distortion function $g$ at $0$.
\end{proof}
\begin{proof}(\textbf{Lemma \ref{lm:hat_nabla_rho_H}})
Let $R^b_{(i)}$ be the $i^{th}$ smallest order statistic from the samples $\{R^b_i\}_{i=1}^m$, and $\psi^\theta_{(i)}$ is the importance sampling ratio of $R^b_{(i)}$. Then we rewrite \eqref{eq:H} as given below.
\begin{align}
    \label{eq:H2}
    H^m_{R^{\theta}}(x) =
    \begin{cases}
        0,&\textrm{if } x < R^b_{(1)}\\
        min\{1,\frac{1}{m}\sum\limits_{j=1}^{i}\psi^\theta_{(j)}\},&\textrm{if } R^b_{(i)} \leq x < R^\theta_{(i+1)},\\& i\in\{1,\!\cdots\!,m-1\}\\
        1,&\textrm{if } x \geq R^b_{(m)},
    \end{cases}
\end{align}
Let $\nabla l^{\theta}_{(i)}\!=\! \sum_{t=0}^{T^{(i)}-1}\nabla\!\log \pi_{\theta}(A_t^{(i)} | S_t^{(i)})$, where $T^{(i)}$ is the length, and  $S_t^{(i)}$ and $A_t^{(i)}$ are the state and action at time $t$ of the episode corresponding to  $R^b_{(i)}$. We rewrite \eqref{eq:nabla_H} as given below.
  \begin{align}
    \label{eq:nabla_H2}
      \widehat{\nabla} H^m_{R^{\theta}}(x) =
      \begin{cases}
          0,&\textrm{if } x < R^b_{(1)}\\
          \frac{1}{m}\sum_{j=1}^i \nabla l^{\theta}_{(j)}\psi^\theta_{(j)},&\textrm{if } R^b_{(i)} \leq x < R^b_{(i+1)},\\ &i\in\{1,\!\cdots\!,m-1\}\\
          \frac{1}{m}\sum_{j=1}^m\nabla l^{\theta}_{(j)}\psi^\theta_{(j)},&\textrm{if } x \geq R^b_{(m)}.
      \end{cases}
  \end{align}
Now,
\begin{align*}
    &\widehat{\nabla}^H\!\rho_g(\theta)=-\int_{-M_r}^{M_r} g'(1-H^m_{R^{\theta}}(x)) \widehat{\nabla}H^m_{R^{\theta}}(x)dx\\
    &=-\int_{-M_r}^{R^b_{(1)}} g'(1\!-\!H^m_{R^{\theta}}(x))\widehat{\nabla}H^m_{R^{\theta}}(x)dx\\
   & \quad- \sum\nolimits_{i=1}^{m-1} \int_{R^b_{(i)}}^{R^b_{(i+1)}}g'(1\!-\!H^m_{R^{\theta}}(x))\widehat{\nabla}H^m_{R^{\theta}}(x) dx\\
    &\quad- \int_{R^b_{(m)}}^{M_r}g'(1\!-\!H^m_{R^{\theta}}(x))\widehat{\nabla}H^m_{R^{\theta}}(x)dx\\
    &= -\frac{1}{m}\sum\nolimits_{i=1}^{m-1} \int_{R^b_{(i)}}^{R^b_{(i+1)}}g'\left(1\!-\! min\left\{1,\frac{1}{m}\sum\limits_{j=1}^{i}\psi^\theta_{(j)}\right\}\right) \\
    &\times\sum_{j=1}^i \nabla l^{\theta}_{(j)}\psi^\theta_{(j)}dx
    -  \frac{1}{m}\int_{R^b_{(m)}}^{M_r}g'_{+}(0)\sum\nolimits_{j=1}^m \nabla l^{\theta}_{(j)} \psi^\theta_{(j)}dx\\
    &= \frac{1}{m}\sum_{i=1}^{m-1}\!\left( R^b_{(i)}\!-\!R^b_{(i+1)}\right) g'\!\left(1\!-\! min\left\{1,\frac{1}{m}\sum\limits_{j=1}^{i}\psi^\theta_{(j)}\right\}\right)\\
    &\times\sum_{j=1}^i \nabla l^{\theta}_{(j)}\psi^\theta_{(j)}
    +  \frac{1}{m}\left(R^b_{(m)} -M_r\right) g'_{+}(0)\sum\limits_{j=1}^m\nabla l^{\theta}_{(j)}\psi^\theta_{(j)}.
\end{align*}
\end{proof}
\section{Analysis of DRM-offP-LR}
\label{sec:conv_lr_offp}
\label{subsec:offp_lr}
\begin{proof}(\textbf{Lemma \ref{lm:nablaFH}})
    We use parallel arguments to the proof of Lemma \ref{lm:nablaFG}.
    For any episode $\omega$, let the importance sampling ratio be
    \begin{align}
        \label{eq:psi_omega}
        \psi^\theta(\omega)=\prod\nolimits_{t=0}^{T(\omega)-1}\frac{\pi_{\theta}(A_t(\omega)\mid S_t(\omega))}{b(A_t(\omega)\mid S_t(\omega))}.
    \end{align}
    Also, let
    \begin{align}
        \label{eq:b_omega}
        \p_b(\omega) \!=\!\prod\limits_{t=0}^{T(\omega)-1}b(A_t(\omega)|S_t(\omega))p(S_{t+1}(\omega),S_t(\omega),A_t(\omega)).
    \end{align}
    From \eqref{eq:pi_omega},\eqref{eq:is_grad_onp}, \eqref{eq:psi_omega}, and \eqref{eq:b_omega}, we obtain
    \begin{align}
        \label{eq:is_grad}
        \frac{\nabla \p_\theta(\omega)}{\p_b(\omega)} \!=\!\psi^\theta(\omega)\!\sum\nolimits_{t=0}^{T(\omega)-1}\!\nabla\log\pi_\theta(A_t(\omega) | S_t(\omega)).
    \end{align}
    From \eqref{eq:B_FG}, we obtain
    \begin{align*}
        &\nabla F_{R^{\theta}}(x)=\sum_{\omega\in\Omega} \1\{R(\omega)\leq x\}\nabla \p_\theta(\omega)\\
        &=\sum_{\omega\in\Omega} \1\{R(\omega)\leq x\} \frac{\nabla \p_\theta(\omega)}{\p_b(\omega)} \p_b(\omega)\\
        &\stackrel{(a)}{=}\!\sum_{\omega\in\Omega}\!\!\1\{R(\omega)\!\leq\! x\} \psi^\theta(\omega)\!\!\!\! \sum_{t=0}^{T(\omega)-1}\!\!\!\!\nabla\log\pi_\theta(A_t(\omega)|S_t(\omega))\p_b(\omega)\\
        &=\E\left[\1\{R^b\leq x\}\psi^\theta\sum\nolimits_{t=0}^{T-1}\nabla\log\pi_\theta(A_t|S_t)\right],
    \end{align*}
    where \((a)\) follows from \eqref{eq:is_grad}.
\end{proof}

In the Lemma below, we establish an upper bound on the variance of the gradient estimate $\widehat{\nabla}^H \rho_g(\theta)$ as defined in \eqref{eq:hat_nabla_rho_H}. Subsequently, we use this result to prove Lemma \ref{lm:biasH} and Theorem \ref{tm:drm_offP_lr}.
\begin{lemma}
    \label{lm:varH}
    $\E\left\lVert \widehat{\nabla}^H\! \rho_g(\theta) \right\rVert^2 \leq 4 M_r^2 M_{g'}^2 M_s^2M_e^2M_d^2.$
\end{lemma}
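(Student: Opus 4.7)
The plan is to follow the same template as the proof of Lemma \ref{lm:varG}, with the sole change being that the almost-sure bound on the integrand must now absorb the importance sampling ratio. The key structural observation is that $\widehat{\nabla}^H\rho_g(\theta)$ differs from $\widehat{\nabla}^G\rho_g(\theta)$ only through the replacement of $\widehat{\nabla}G^m_{R^\theta}$ by $\widehat{\nabla}H^m_{R^\theta}$ (together with $H^m$ replacing $G^m$ inside $g'$, which is irrelevant for this variance-type bound since we only use $|g'|\le M_{g'}$).

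First I would establish the off-policy analogue of \eqref{eq:nabla_Gm_bound}, namely an almost-sure bound on $\widehat{\nabla}H^m_{R^\theta}(x)$. Starting from \eqref{eq:nabla_H}, I apply the triangle inequality inside the sample average, then use $|\1\{R^b_i\le x\}|\le 1$, the importance-sampling bound $\psi^\theta_i \le M_s$ a.s.\ from \eqref{eq:is_ratio}, the episode-length bound $T^i \le M_e$ a.s.\ from \eqref{eq:M_b}, and the per-step log-policy gradient bound $\lVert\nabla\log\pi_\theta(A^i_t\mid S^i_t)\rVert\le M_d$ from \ref{as:nabla_logpi}. This yields
\begin{align*}
\left\lVert \widehat{\nabla}H^m_{R^\theta}(x)\right\rVert^2 \leq M_s^2 M_e^2 M_d^2 \quad \text{a.s.}, \;\forall x\in(-M_r,M_r).
\end{align*}

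Next, starting from the definition \eqref{eq:hat_nabla_rho_H} and applying $|g'|\le M_{g'}$ from \ref{as:g'_bound}, I get
$\E\lVert \widehat{\nabla}^H\rho_g(\theta)\rVert^2 \le M_{g'}^2\,\E\lVert\int_{-M_r}^{M_r}\widehat{\nabla}H^m_{R^\theta}(x)\,dx\rVert^2$. Then I apply the Cauchy--Schwarz inequality (with integration domain of length $2M_r$) to pull the squared norm inside the integral, exactly as in the proof of Lemma \ref{lm:varG}, obtaining an upper bound of $2M_r M_{g'}^2 \E\!\int_{-M_r}^{M_r}\lVert\widehat{\nabla}H^m_{R^\theta}(x)\rVert^2 dx$. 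Fubini's theorem (applicable since the integrand is non-negative and bounded a.s., and the $x$-domain is finite) lets me swap expectation and integration. Finally, substituting the almost-sure bound from the previous step and integrating the constant over $[-M_r,M_r]$ gives $2M_r\cdot M_{g'}^2\cdot 2M_r\cdot M_s^2M_e^2M_d^2 = 4M_r^2 M_{g'}^2 M_s^2 M_e^2 M_d^2$, which is the claimed bound.

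I do not anticipate a genuine obstacle here; the entire argument is mechanical once the a.s.\ bound on $\widehat{\nabla}H^m_{R^\theta}(x)$ is written down, and the only new ingredient relative to Lemma \ref{lm:varG} is the factor $M_s$ coming from the importance-sampling ratio. The one point worth being careful about is to apply the IS bound \eqref{eq:is_ratio} before taking norms, so that the resulting a.s.\ bound carries the right $M_s^2$ factor (rather than only a bound in expectation, which would complicate the subsequent Fubini step).
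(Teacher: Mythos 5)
Your proposal is correct and matches the paper's own proof: the paper likewise first establishes the almost-sure bound $\lVert\widehat{\nabla}H^m_{R^{\theta}}(x)\rVert^2 \leq M_s^2M_e^2M_d^2$ from \eqref{eq:nabla_H}, \ref{as:nabla_logpi}, \eqref{eq:M_b}, and \eqref{eq:is_ratio}, and then invokes the same Cauchy--Schwarz/Fubini argument as in Lemma \ref{lm:varG}. No gaps.
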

\begin{proof}
    From \eqref{eq:nabla_H}, \ref{as:nabla_logpi}, \eqref{eq:M_b}, and \eqref{eq:is_ratio}, we obtain
    \begin{align}
        \label{eq:nabla_Hm_bound}
        \left\lVert\widehat{\nabla}H^m_{R^{\theta}}(x)\right\rVert^2 \leq M_s^2M_e^2M_d^2 \textrm{ a.s.}, \forall x \in(-M_r,M_r).
    \end{align}
    The result follows by using similar arguments as in Lemma \ref{lm:varG} along with \eqref{eq:nabla_Hm_bound}.
\end{proof}
\begin{proof}(\textbf{Lemma \ref{lm:biasH}})
    We use parallel arguments to the proof of Lemma \ref{lm:biasG}.
    From \eqref{eq:is_ratio}, we obtain $\forall x \in(-M_r,M_r),\left\lvert\1\{R^{\theta}\leq x\}\psi^\theta\right\rvert  \leq  M_s$ a.s., and 
    we observe that $\left\{m'\left(\hat{H}^{m'}_{R^{\theta}}(x) - F_{R^{\theta}}(x)\right)\right\}_{m'=1}^{m}$ is a set of partial sums of bounded mean zero r.v.s, and hence they are martingales.
    Using Azuma-Hoeffding's inequality, we obtain $\forall x\in(-M_r,M_r)$,
    \begin{align}
        \label{eq:hatH_prob}
        \p\left(\left\lvert \hat{H}^m_{R^{\theta}}(x) - F_{R^{\theta}}(x) \right\rvert > \epsilon\right) \leq 2\exp\left(-\frac{m\epsilon^2}{2M_s^2}\right).
    \end{align}
    From \eqref{eq:H} and \eqref{eq:hatH}, we observe that\\$\p\left(\left\lvert H^m_{R^{\theta}}(x) - F_{R^{\theta}}(x) \right\rvert \!>\! \epsilon\right) \!\leq\! \p\left(\left\lvert \hat{H}^m_{R^{\theta}}(x) - F_{R^{\theta}}(x) \right\rvert \!>\! \epsilon\right)$. Hence, we obtain $\forall x \in(-M_r,M_r)$,
    \begin{align}
        \label{eq:H_prob}
        \p\left(\left\lvert H^m_{R^{\theta}}(x) - F_{R^{\theta}}(x) \right\rvert > \epsilon\right) \leq 2\exp\left(-\frac{m\epsilon^2}{2M_s^2}\right).
    \end{align}
    Using similar arguments as in \eqref{eq:G_err} along with \eqref{eq:H_prob}, we obtain $\forall x \in(-M_r,M_r)$,
    \begin{align}
        \label{eq:H_err}
        \E\left\lvert H^m_{R^{\theta}}(x) - F_{R^{\theta}}(x)\right\rvert^2 \leq\frac{4 M_s^2}{m}.
    \end{align}
    From \ref{as:nabla_logpi}, \eqref{eq:M_b}, and \eqref{eq:is_ratio}, we obtain $\forall x \in(-M_r,M_r)$,
    \begin{align}
        \label{eq:nabla_H_bound}
        \left\lVert\1\{R^b\leq x\}\psi^\theta \sum_{t=0}^{T-1}\nabla\log \pi_{\theta}(A_t| S_t)\right\rVert \leq  M_s M_e M_d \textrm{ a.s}.
    \end{align}
    We observe that $\left\{m'\left(\widehat{\nabla}H^{m'}_{R^{\theta}}(x) - \nabla F_{R^{\theta}}(x) \right)\right\}_{m'=1}^m$ is a set of partial sums of bounded mean zero r.v.s  from \eqref{eq:nabla_H_bound}, and hence they are martingales. Using vector version of Azuma-Hoeffding inequality from Theorem~1.8-1.9 in \cite{hayes2005large}, we obtain $\forall x \in(-M_r,M_r)$,
    \begin{align}
        \label{eq:nabla_H_prob}
        \p\left(\left\lVert\widehat{\nabla}H^m_{R^{\theta}}(x) \!-\!  \nabla F_{R^{\theta}}(x) \right\rVert \!>\! \epsilon\right) \!\leq\! 2e^2\exp\left(\frac{-m\epsilon^2}{2M_s^2M_e^2M_d^2}\right).
    \end{align}
    Using similar arguments as in \eqref{eq:nabla_G_err} along with \eqref{eq:nabla_H_prob}, we obtain $\forall x \in(-M_r,M_r)$,
    \begin{align}
        \label{eq:nabla_H_err}
        \E\left\lVert \nabla F_{R^{\theta}}(x) - \widehat{\nabla}H^m_{R^{\theta}}(x) \right\rVert^2 = \frac{4e^2M_s^2M_e^2M_d^2 }{m}.
    \end{align}
    Using similar arguments as in  \eqref{eq:G_A}, along with \eqref{eq:H_err}, and \eqref{eq:nabla_Hm_bound}, we obtain $\forall x \in(-M_r,M_r)$,
    \begin{align}
        &\E\left\lVert\left(g'(1-F_{R^{\theta}}(x))-  g'(1-H^m_{R^{\theta}}(x))\right) \widehat{\nabla}H^m_{R^{\theta}}(x) \right\rVert^2\nonumber\\
        &\leq \frac{4M_{g''}^2M_s^4M_e^2M_d^2}{m}.\label{eq:H_A}
    \end{align}
    Using similar arguments as in \eqref{eq:nablaG_A} along with \eqref{eq:nabla_H_err}, we obtain $\forall x \in(-M_r,M_r)$,
    \begin{align}
        \label{eq:nablaH_A}
        &\E\left\lVert g'(1-F_{R^{\theta}}(x)) \left(\nabla F_{R^{\theta}}(x)-\widehat{\nabla}H^m_{R^{\theta}}(x)\right)\right\rVert^2\nonumber\\
        &\leq\frac{4e^2M_{g'}^2M_s^2M_e^2M_d^2}{m}.
    \end{align}
    The result follows by using similar arguments as in \eqref{eq:biasG} along with \eqref{eq:H_A} and \eqref{eq:nablaH_A}.
\end{proof}
\begin{proof}(\textbf{Theorem \ref{tm:drm_offP_lr}})
    By using a completely parallel argument to the initial passage in the proof of Theorem \ref{tm:drm_onP_lr} leading up to \eqref{eq:2_lr_onp}, we obtain
    \begin{align*}
        &\alpha\,\E\left \lVert \nabla \rho_g(\theta_k)\right \rVert^2
        \leq  2\,\E\left[\rho_g(\theta_{k+1}) - \rho_g(\theta_{k})\right] \\
        &+ L_{\rho'}\alpha^2\ \E\left\lVert \widehat{\nabla}^H\rho_g(\theta_k) \right\rVert^2
        +  \alpha\E\left \lVert \nabla \rho_g(\theta_k) -\widehat{\nabla}^H\rho_g(\theta_k) \right \rVert^2 \\
        &\leq  2\E\left[\rho_g(\theta_{k+1}) - \rho_g(\theta_{k})\right]\\
        &+ \alpha 4 M_r^2M_s^2M_e^2M_d^2\left(\alpha M_{g'}^2 L_{\rho'}+ \frac{8(e^2M_{g'}^2 + M_{g''}^2M_s^2)}{m}\!\right),
    \end{align*}
    where the last inequality follows from Lemmas \ref{lm:biasH} and \ref{lm:varH}.
    Summing the above result from $k\!=\!0,\!\cdots\!,N\!-\!1$, we obtain
    \begin{align*}
        &\alpha\sum\nolimits_{k=0}^{N-1}\E\left \lVert \nabla \rho_g(\theta_k)\right \rVert^2\leq  2 \E\left[\rho_g(\theta_{N}) - \rho_g(\theta_{0})\right]\\
        &+ N \alpha 4 M_r^2M_s^2M_e^2M_d^2 \left(\alpha M_{g'}^2L_{\rho'} \!+\! \frac{8(e^2M_{g'}^2 \!+\! M_{g''}^2M_s^2)}{m}\right).
    \end{align*}
    Since $\theta_R$ is chosen uniformly at random from the policy iterates $\{\theta_0,\cdots,\theta_{N-1}\}$, we obtain
    \begin{align*}
        &\E\left\lVert \nabla \rho_g(\theta_R)\right\rVert^2
        \!=\! \frac{1}{N}\sum\limits_{k=0}^{N-1}\E\left\lVert \nabla \rho_g(\theta_k)\right \rVert^2
        \!\leq\! \frac{2 \left(\rho_g^* \!-\! \rho_g(\theta_{0})\right)}{N \alpha}\\
        &+ 4 M_R^2M_s^2M_e^2M_d^2 \left(\alpha M_{g'}^2L_{\rho'} + \frac{8(e^2M_{g'}^2 + M_{g''}^2M_s^2)}{m}\right).
    \end{align*}
\end{proof}

\end{document}